%%%%%%%% ICML 2025 EXAMPLE LATEX SUBMISSION FILE %%%%%%%%%%%%%%%%%

\documentclass{article}

% Recommended, but optional, packages for figures and better typesetting:
%\usepackage{microtype}
%\usepackage{graphicx}
%\usepackage{subfigure}
%\usepackage{booktabs} % for professional tables

% Recommended, but optional, packages for figures and better typesetting:
\usepackage{microtype}
\usepackage{graphicx}
\usepackage{caption}
\usepackage{subcaption}
\usepackage{booktabs} % for professional tables

% hyperref makes hyperlinks in the resulting PDF.
% If your build breaks (sometimes temporarily if a hyperlink spans a page)
% please comment out the following usepackage line and replace
% \usepackage{icml2025} with \usepackage[nohyperref]{icml2025} above.
\usepackage{hyperref}

% Attempt to make hyperref and algorithmic work together better:

% Use the following line for the initial blind version submitted for review:
 % \usepackage{icml2025}

% If accepted, instead use the following line for the camera-ready submission:
\usepackage[accepted]{icml2025}

% For theorems and such
\usepackage{amsmath}
\usepackage{amssymb}
\usepackage{mathtools}
\usepackage{amsthm}

%\UseRawInputEncoding

%
\setlength\unitlength{1mm}

\long\def\comment#1{}

% bb font symbols

\newfont{\bbb}{msbm10 scaled 700}

\newfont{\bb}{msbm10 scaled 1100}

\newcommand{\EE}{\mbox{\bb E}}

% Vectors

% Matrices

% Calligraphic

\newcommand{\Ac}{{\cal A}}

\newcommand{\Dc}{{\cal D}}

\newcommand{\Nc}{{\cal N}}

\newcommand{\Sc}{{\cal S}}

% Bold greek letters

% mixed symbols

% Colors

% For theorems and such
\DeclareMathOperator*{\argmax}{arg\,max}
\DeclareMathOperator*{\argmin}{arg\,min}
\usepackage{mathtools}

% if you use cleveref..
\usepackage[capitalize,noabbrev]{cleveref}

%%%%%%%%%%%%%%%%%%%%%%%%%%%%%%%%
% THEOREMS
%%%%%%%%%%%%%%%%%%%%%%%%%%%%%%%%
\theoremstyle{plain}
\newtheorem{theorem}{Theorem}[section]

\theoremstyle{definition}

\theoremstyle{remark}

% Todonotes is useful during development; simply uncomment the next line
%    and comment out the line below the next line to turn off comments
%\usepackage[disable,textsize=tiny]{todonotes}
\usepackage[textsize=tiny]{todonotes}

% The \icmltitle you define below is probably too long as a header.
% Therefore, a short form for the running title is supplied here:
\icmltitlerunning{Moderate Actor-Critic Methods}

\begin{document}

\twocolumn[
%\icmltitle{Controlling Overestimation Bias in Actor-Critic Methods\\ Via Expectile Loss}
\icmltitle{Moderate Actor-Critic Methods:\\ Controlling Overestimation Bias via Expectile Loss}

% It is OKAY to include author information, even for blind
% submissions: the style file will automatically remove it for you
% unless you've provided the [accepted] option to the icml2025
% package.

% List of affiliations: The first argument should be a (short)
% identifier you will use later to specify author affiliations
% Academic affiliations should list Department, University, City, Region, Country
% Industry affiliations should list Company, City, Region, Country

% You can specify symbols, otherwise they are numbered in order.
% Ideally, you should not use this facility. Affiliations will be numbered
% in order of appearance and this is the preferred way.
\icmlsetsymbol{equal}{*}

\begin{icmlauthorlist}
\icmlauthor{Ukjo Hwang}{hanyang}
\icmlauthor{Songnam Hong}{hanyang}
%\icmlauthor{Firstname3 Lastname3}{comp}
%\icmlauthor{Firstname4 Lastname4}{sch}
%\icmlauthor{Firstname5 Lastname5}{yyy}
%\icmlauthor{Firstname6 Lastname6}{sch,yyy,comp}
%\icmlauthor{Firstname7 Lastname7}{comp}
%\icmlauthor{}{sch}
%\icmlauthor{Firstname8 Lastname8}{sch}
%\icmlauthor{Firstname8 Lastname8}{yyy,comp}
%\icmlauthor{}{sch}
%\icmlauthor{}{sch}
\end{icmlauthorlist}

\icmlaffiliation{hanyang}{Department of Electronic Engineering, Hanyang University, Seoul, Korea}
%\icmlaffiliation{comp}{Company Name, Location, Country}
%\icmlaffiliation{sch}{School of ZZZ, Institute of WWW, Location, Country}

\icmlcorrespondingauthor{Songnam Hong}{snhong@hanyang.ac.kr}
%\icmlcorrespondingauthor{Firstname2 Lastname2}{first2.last2@www.uk}

% You may provide any keywords that you
% find helpful for describing your paper; these are used to populate
% the "keywords" metadata in the PDF but will not be shown in the document
\icmlkeywords{Machine Learning, ICML}

\vskip 0.3in
]

% this must go after the closing bracket ] following \twocolumn[ ...

% This command actually creates the footnote in the first column
% listing the affiliations and the copyright notice.
% The command takes one argument, which is text to display at the start of the footnote.
% The \icmlEqualContribution command is standard text for equal contribution.
% Remove it (just {}) if you do not need this facility.

%\printAffiliationsAndNotice{}  % leave blank if no need to mention equal contribution
% \printAffiliationsAndNotice{\icmlEqualContribution} % otherwise use the standard text.

%This document provides a basic paper template and submission guidelines.
%Abstracts must be a single paragraph, ideally between 4--6 sentences long.
%Gross violations will trigger corrections at the camera-ready phase.
\begin{abstract}
Overestimation is a fundamental characteristic of model-free reinforcement learning (MF-RL), arising from the principles of temporal difference learning and the approximation of the Q-function. To address this challenge, we propose a novel moderate target in the Q-function update, formulated as a convex optimization of an overestimated Q-function and its lower bound. Our primary contribution lies in the efficient estimation of this lower bound through the lower expectile of the Q-value distribution conditioned on a state. Notably, our moderate target integrates seamlessly into state-of-the-art (SOTA) MF-RL algorithms, including Deep Deterministic Policy Gradient (DDPG) and Soft Actor Critic (SAC). Experimental results validate the effectiveness of our moderate target in mitigating overestimation bias in DDPG, SAC, and distributional RL algorithms.
%Overestimation is an inherent characteristic of model-free reinforcement learning (MF-RL), arising from the fundamental principles of temporal difference learning and the approximation of Q-function. To alleviate this issue, we propose a novel {\em moderate} target in the Q-function update, which is constructed as a convex combination of an overestimated Q-function and its lower bound. Our major contribution is the efficient estimation of this lower bound via the lower expectile of the Q-value distribution conditioned on a state.
%Our key contribution is to efficiently estimate this lower bound through the lower expectile of the Q-value distribution. Remarkably, our moderate target can be seamlessly integrated into the state-of-the-art (SOTA) MF-RL algorithms, such as Deep Deterministic Policy Gradient (DDPG), Soft Actor Critic (SAC). Through experiments conducted on MuJoCo continuous control tasks, we validate the effectiveness of our moderate target in alleviating overestimation bias in DDPG, SAC, and SOTA distributional RL.
\end{abstract}

%%%%%%%%%%%%%%%%%%%%%%%%%%%%%%%%%%%%
\begin{figure*}[t]
   \centering  
   \includegraphics[width=0.80\linewidth]{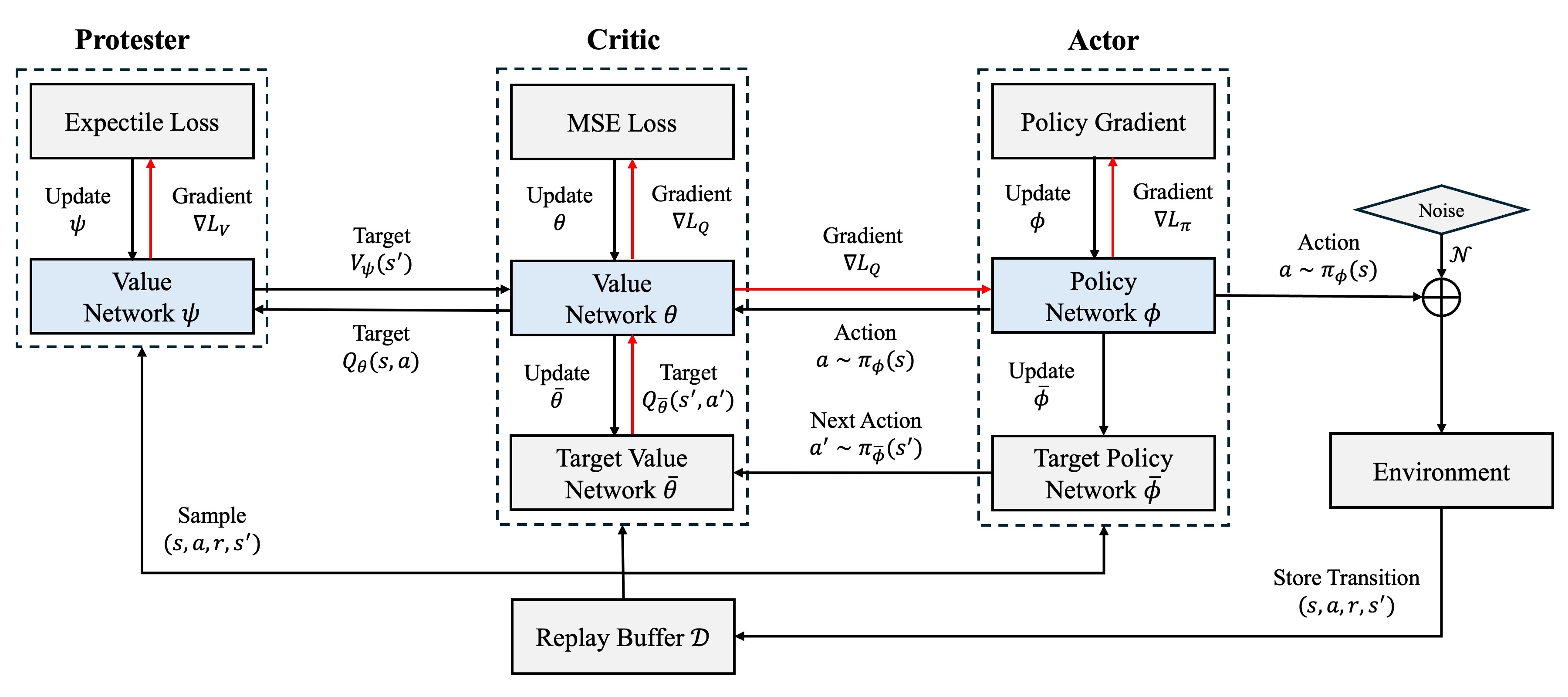} 
   \caption{Structure diagrams of algorithms utilizing the proposed protester.}
   \label{fig:structure_of_mpg}
\end{figure*}
%%%%%%%%%%%%%%%%%%%%%%%%%%%%%%%%%%%%

%%%%%%%%%%%%%%%%%%%%%%%%%%%%%%%%%%%%%%%%
\section{Introduction}\label{sec:intro}
%%%%%%%%%%%%%%%%%%%%%%%%%%%%%%%%%%%%%%%%%%%

Model-free reinforcement learning (MF-RL) has garnered considerable attention due to its capacity to learn without an explicit knowledge of an environment, thus proving highly applicable to complex and high-dimensional tasks \cite{arulkumaran2017deep, wang2022deep}. Recent advancements in deep neural networks (DNNs) have accelerated the practicality of MF-RL algorithms, enabling them to address increasingly challenging tasks across diverse domains including AI-native networks, autonomous systems, complex strategy optimization \cite{luong2019applications, yang2020deep, kiran2021deep}.

%Overestimation bias is a major obstacle when applying MF-RL algorithms to real-world applications \cite{thrun2014issues}. This issue is deeply rooted in the property of temporal difference (TD) learning \cite{thrun2014issues,pendrith1997estimator, mannor2007bias} and Bellman optimality equation \cite{bellman1966dynamic}. Specifically, the maximization of noisy Q-values in the standard TD target can induce a consistent overestimation \cite{thrun2014issues}. In continuous control environments, this issue becomes more serious as function approximations (e.g., DNNs) are bound to generate errors due to the imprecision of the estimator and insufficient training samples \cite{pendrith1997estimator, mannor2007bias}. 

In MF-RL algorithms \cite{mnih2015human, haarnoja2018soft_1, lillicrap2015continuous}, the accuracy of the Q-function approximation is critical in determining their stability and performances. The Q-function serves as the foundation for temporal difference (TD) target computation and action selection in value-based methods \cite{mnih2015human} or policy optimization in continuous actor-critic methods \cite{haarnoja2018soft_1, lillicrap2015continuous, kuznetsov2020controlling}. Overestimation bias is a major obstacle when applying MF-RL algorithms to real-world applications \cite{thrun2014issues}. 
This issue is deeply rooted in the properties of TD learning \cite{thrun2014issues,pendrith1997estimator, mannor2007bias} and Bellman optimality equation \cite{bellman1966dynamic}. Specifically, the maximization of noisy Q-values in the standard TD target can lead to a consistent overestimation \cite{thrun2014issues}.  
In the context of function approximations, this issue becomes more problematic, as the approximation noise is unavoidable due to the estimator's imprecision and an insufficient number of samples \cite{pendrith1997estimator, mannor2007bias}.

%RELATED WORKS 중복?
%{\BLUE Over the years, several approaches have been proposed to alleviate overestimation bias in  continuous control environments. Double Deep-Q-Network (DDQN) \cite{van2016deep} effectively mitigate this issue by decoupling action selection from Q-value estimation. However, \cite{fujimoto2018addressing} identified that DDQN is ineffective in the state-of-the-art (SOTA) actor-critic algorithms due to the slow-changing policy. To address this, the min-Q target was proposed by taking the minimum of two independent Q-values. Combining this target with Deep Deterministic Policy Gradient (DDPG) \cite{lillicrap2015continuous}, Twin-Delayed DDPG (TD3) \cite{fujimoto2018addressing} was proposed. Recently, 
%Truncated Quantile Critics (TQC) \cite{kuznetsov2020controlling}, implemented on top of Soft Actor Critic (SAC) \cite{haarnoja2018soft_1} and distributional RL, showed the best-known performance in large-scale environments by properly controlling overestimation bias.
%}

In this paper, we propose a novel {\em moderate} target designed to alleviate overestimation bias in RL. This target is formulated as a convex combination of an overestimated Q-function and its corresponding lower bound. Our key contribution lies in the effective derivation of the lower bound, achieved by estimating the lower expectile of the Q-value distribution conditioned on any given state. In this estimation, expectile loss \cite{bellini2017risk} is used as a generalization of the classical mean-squared-error (MSE) loss. By appropriately selecting the combination weight, the resulting Q-values in the moderate target can closely approximate the corresponding true Q-values. We incorporate the proposed moderate target into Deep Deterministic Policy Gradient (DDPG) \cite{lillicrap2015continuous}, Twin-Delayed DDPG (TD3) \cite{fujimoto2018addressing}, Soft Actor Critic (SAC) \cite{haarnoja2018soft_1}, and Truncated Quantile Critics (TQC) \cite{kuznetsov2020controlling}. This integration results in the development of new algorithms: Moderate Policy Gradient (MPG), Moderate Policy Gradient-Smoothing Delayed (MPG-SD), Moderate Actor-Critic (MAC), and Moderate Quantile Critics (MQC), respectively.
Due to the flexibility of the moderate target, notably, it can be seamlessly integrated with other RL methods, such as Q-learning \cite{watkins1992q} and DQN \cite{van2016deep} for discrete control tasks, as well as Advantage Actor-Critic (A2C) \cite{mnih2016asynchronous} and Proximal Policy Optimization (PPO) \cite{schulman2017proximal} for continuous control tasks.
%We incorporate the proposed moderate target into DDPG, TD3, SAC, and TQC, resulting in the development of Moderate Policy Gradient (MPG), Moderate Policy Gradient-Smoothing Delayed (MPG-SD), Moderate Actor-Critic (MAC), and Moderate Quantile Critics (MQC), which are designed for continuous control environments.
%Notably, due to the flexibility of the moderate target, it can be naturally integrated with other RL methods, such as Q-learning \cite{watkins1992q} and DQN \cite{van2016deep} for discrete control tasks, as well as Advantage Actor-Critic (A2C) \cite{mnih2016asynchronous} and Proximal Policy Optimization (PPO) \cite{schulman2017proximal} for continuous control tasks.
We validate the effectiveness of our algorithms through experiments on challenging continuous control tasks utilizing the MuJoCo physics engine \cite{todorov2012physics}, implemented in OpenAI Gym \cite{brockman2016openai}. Our algorithms consistently outperform their respective baseline counterparts in terms of both performance and variance (or stability). Importantly, this improvement is achieved without compromising training complexity. These results underscore the potential of the moderate target as a key component for the implementation of MF-RL algorithms across a wide range of complex environments.

%%%%%%%%%%%%%%%%%%%%%%%%%%%%%%%%%%%%%%%%%%%%%%%%%
\section{Related Works}\label{sec:related_works}

Numerous algorithms have been developed to mitigate overestimation bias. Double Q-learning \cite{hasselt2010double}, regarded as the de-facto algorithm for discrete control environments, employs two estimators of the Q-values. One estimator is used to select an action, while the other assesses the selected action, thereby ensuring that the Q-function is not overestimated. Double Deep-Q-Network (DDQN) \cite{van2016deep} extends this tabular-based approach to accommodate function approximations such as DNNs. In continuous control environments, \cite{fujimoto2018addressing} identified that DDQN is ineffective in popular actor-critic methods due to the slow-changing policy. In order to address this limitation, TD3 was introduced, which enhances DDPG \cite{lillicrap2015continuous} by incorporating a new target into the Q-value update. Specifically, overestimation bias can be mitigated by defining the target as the minimum value of two approximated Q-functions, referred to as the min-Q target. Due to its notable performance, this approach has been widely embraced and implemented in several subsequent works \cite{haarnoja2018soft_1, lan2020maxmin, hiraoka2021dropout, chen2021randomized}. In addition to enhancing the Q-value target, TD3 incorporated two further techniques to diminish the variance of estimates: delayed policy updates and target policy smoothing. 

Based on distributional RL \cite{bellemare2017distributional}, TQC \cite{kuznetsov2020controlling} was developed by blending three ideas: the distributional representation of a critic, the truncation of the approximated distribution, and ensembling. Instead of the traditional modeling of the Q-function, which relies on the expected return, TQC focuses on modeling the return distribution. By truncating the upper quantile of the estimated return distribution and leveraging the ensemble of multiple Q-value approximators, TQC effectively mitigates overestimation bias, thereby enhancing the robustness of Q-value estimates in large-scale environments.

In addition to the baseline algorithms, several variants have been introduced in the literature. Average DQN \cite{anschel2017averaged} diminishes the variance of target approximations by averaging previously learned Q-value estimates. Weighted double Q-learning \cite{zhang2017weighted} addresses the overestimation inherent in the standard Q-learning and the underestimation present in double Q-learning through a weighted combination. Furthermore, ensemble-based algorithms have been explored, including the use of a linear combination of the maximum and minimum Q-values from a pool of Q-networks \cite{li2019mixing, kumar2019stabilizing}, the aggregation of different Q-value predictions via a soft-max function \cite{pan2020softmax}, and the computation of Q-value targets through convex combinations of predictions from multiple policies \cite{lyu2022efficient}.

%문장표현 수정중!
%%%%%%%%%%%%%%%%%%%%%%%%%%%%%%%%%%%%%%%%%%%
\section{Background}\label{sec:prelimi}
%%%%%%%%%%%%%%%%%%%%%%%%%%%%%%%%%%%%%%%%%%%%

We provide the notations and definitions, and briefly explain the baseline frameworks to build our algorithms.

%%%%%%%%%%%%%%%%%%%%%%%%%%%%%%%%%%%%%%%%%
\subsection{Model-Free RL}
%%%%%%%%%%%%%%%%%%%%%%%%%%%%%%%%%%%%%%%%

We consider an infinite-horizon Markov Decision Process (MDP) \cite{watkins1992q, puterman2014markov}, defined by the tuple $(\mathcal{S}, \mathcal{A}, p, r, \gamma)$, where $\Sc$ is a state space, $\Ac$ is an action space,  $p : \mathcal{S} \times \mathcal{S} \times \mathcal{A} \rightarrow [0, \infty)$ is unknown state transition distribution, $r: \mathcal{S} \times \mathcal{A} \rightarrow \mathbb{R}$ is a reward function, and $\gamma \in (0, 1)$ is a discount factor.
At every time step $t$, an agent observes the current state $s_t \in \mathcal{S}$, selects an action $a_t \in \mathcal{A}$ according to its policy $\pi(a_t \mid s_t)$, and receives the reward $r(s_t, a_t)$. The environment transitions to a new state $s_{t+1}$ according to $p(s_{t+1}\mid s_t, a_t)$. The agent aims at seeking an optimal policy, denoted as $\pi^{\star}$, to maximize the expected return:
\begin{equation}
     J(\pi) = \mathbb{E}_{\pi}\left[ \sum_{t=0}^{\infty} \gamma^t r(s_t, a_t) \right],
    \label{eq:policy_objective}
\end{equation} where the expectation is taken over the state-action trajectories generated by a  policy $\pi$. To find an optimal policy, it is crucial to accurately evaluate the actions taken in a given state. This requires the precise estimation of the action-value function, known as the Q-function, which predicts the cumulative future rewards for a given state-action pair. The Q-function is formally defined as:
\begin{equation}
    Q^\pi(s, a) = \mathbb{E}_{\pi} \left[ \sum_{t=0}^{\infty} \gamma^t r(s_t, a_t)\; \Bigg\vert\; s_0 = s, a_o = a \right].\label{eq:Q-function}
\end{equation}
To address the optimization problem in continuous control environments, the actor-critic method based on function approximations is widely used. The actor and critic correspond to the policy and Q-function, respectively. In this paper, the actor and critic are represented using parameterized functions denoted as  $\pi_{\phi}$ and $Q_{\theta}$, respectively.

%%%%%%%%%%%%%%%%%%%%%%%%%%%%%%%%%%
\subsection{Overestimation Bias}
%%%%%%%%%%%%%%%%%%%%%%%%%%%%%%%%%%

In Q-learning \cite{watkins1992q}, the Q-function is learned using the greedy target:
\begin{equation}
    y = r(s_t, a_t) + \max_{a'\in \Ac}\; Q^{\pi}(s_{t+1}, a'). \label{eq:greedy_target}
\end{equation}
\citet{thrun2014issues} showed that for an implicit true Q-function $Q^{\star}=Q^{\pi^{\star}}$, the estimated function $Q_{\theta}$ might exhibit overestimation bias, as outlined below:
\begin{align}
    \max_{a'\in\Ac}\; Q^{\star}(s_{t+1},a') &\stackrel{(a)}{=}  \max_{a'\in\Ac}\; \mathbb{E}\left[{Q}_{\theta}(s_{t+1}, a') \right]\nonumber\\
    &\stackrel{(b)}{\leq} \mathbb{E} \left[ \max_{a'\in\Ac}\; {Q}_{\theta}(s_{t+1}, a') \right],\label{eq:OB}
\end{align} where $Q_{\theta}(s,a)$ is a random variable as a function of random samples, (a)  is due to the fact that $Q_{\theta}$ is an unbiased estimator of $Q^{\star}$ (i.e., $\EE[Q_{\theta}]=Q^{\star}$), and (b) follows the Jensen's inequality. Consequently, this overestimation bias can propagate through the Bellman equation. In actor-critic methods, it is therefore essential to modify the greedy target in Equation~\ref{eq:greedy_target} to mitigate overestimation bias.

DDQN \cite{van2016deep} utilizes the double-critic approach, in which one critic is used to choose a greedy action and the other critic is used to evaluate the action. Based on this, the double-Q target is defined as follows:
\begin{gather}
    y_{\text{duo}} = r(s_{t}, a_{t}) +  Q_{\theta_{1}}(s_{t+1}, a' )\nonumber\\
    a' = \argmax_{a'\in \Ac} Q_{\theta_2}(s_{t+1}, a'). \label{eq:double_target}
\end{gather} 
However, \cite{fujimoto2018addressing} identified that DDQN does not fully address overestimation bias in continuous control environments. As an alternative, the authors proposed Clipped Double Q-learning, where the min-Q target is defined as
\begin{gather}
    y_{\text{min}} = r(s_{t}, a_{t}) +  \min_{i \in \{1,2\}} Q_{\theta_i}(s_{t+1}, a')\nonumber\\
    % a'\sim\pi_{\phi}(\cdot \mid s_{t+1}).
    a'=\pi_{\phi}( s_{t+1}).
    \label{eq:clipped_target}
\end{gather} The min-Q target can be directly used in continuous control environments. In Section~\ref{subsec:protester}, we will propose a novel target that more effectively mitigates overestimation bias than the aforementioned double-Q and min-Q targets.

%%%%%%%%%%%%%%%%%%%%%%%%%%%%%%%%%%
\subsection{Actor-Critic Methods}
%%%%%%%%%%%%%%%%%%%%%%%%%%%%%%%%%%

We review DDPG \cite{lillicrap2015continuous}, SAC  \cite{haarnoja2018soft_1}, and TQC \cite{kuznetsov2020controlling} since they will serve as the baseline actor-critic methods of our algorithms. Throughout the paper, $\Dc=\{(s=s_t,a=a_t,r=r(s_t,a_t),s'=s_{t+1})\}$ represents the replay buffer containing  samples. %i.e., $\Dc=\{(s=s_t,a=a_t,r=r(s_t,a_t),s'=s_{t+1})\}$.

%%%%%%%%%%%%%%%%%%%%%%%%
\subsubsection{DDPG}
%%%%%%%%%%%%%%%%%%%%%%%%

DDPG enhances the deterministic policy gradient (DPG) method \cite{silver2014deterministic} by utilizing DNNs to effectively handle large-scale environments. Furthermore, it operates in an off-policy manner,  employing the replay buffer and  incorporating the target actor and critic networks to ensure stable training \cite{mnih2013playing}. The critic network is trained by minimizing the temporal difference (TD) error, wherein the loss function is defined as
\begin{equation}
    L_{Q}(\theta) = \mathbb{E}_{(s,a,r, s') \sim \mathcal{D}} \left[ (y - Q_{\theta}(s, a))^2 \right],
    \label{eq:ddpg_q_update}
\end{equation}
and from the Bellman equation, the standard target $y$ is determined by
\begin{equation}
    y = r + \gamma Q_{\bar{\theta}}(s', a'),\;\;\;\;\; a'=\pi_{\bar{\phi}}(s').
    \label{eq:ddpg_q_target}
\end{equation} Herein, $\bar{\theta}$ and  $\bar{\phi}$ denote the parameters of the target critic and actor networks, respectively. This target represents an extension of the greedy target in Equation~\ref{eq:greedy_target}, specially adapted for continuous control settings. The actor network is trained by maximizing the expected Q-value for action selection, with the loss function:
\begin{equation}
    L_\pi(\phi) = \mathbb{E}_{s \sim \mathcal{D}} \left[ - Q_{\theta}(s, \pi_{\phi}(s)) \right].
    \label{eq:ddpg_policy_update}
\end{equation}
The actor and critic networks are updated alternatively, and the target networks are updated via soft-update mechanism with a target update rate $\eta \in (0.1)$:
\begin{align}
     \bar{\phi} \leftarrow \eta \phi + (1 - \eta) \bar{\phi}, \quad \bar{\theta} \leftarrow \eta \theta + (1 - \eta) \bar{\theta}. 
    \label{eq:target_update}
\end{align} 

%To compare the min-Q target with our target in Section~\ref{subsec:protester}, 
In addition, we develop DDPG(min-Q) by replacing the standard target with the min-Q target below:
%Herein, the target in Equation~\ref{eq:ddpg_q_target} is replaced with:
\begin{equation}
    y_{\rm min} = r + \gamma \min_{i\in\{1,2\}}Q_{\bar{\theta}_i}(s', a'), \;\; a' = \pi_{\bar{\phi}}(s').
    \label{eq:min_Q_target_for_ddpg}
\end{equation}
Following the approach in TD3 \cite{fujimoto2018addressing}, the actor network is learned with following loss function:
\begin{equation}
    L_\pi(\phi) = \mathbb{E}_{s \sim \mathcal{D}} \left[ - Q_{\theta_1}(s, \pi_{\phi}(s)) \right].
    \label{eq:ddpg_min_q_policy_update}
\end{equation}

%%%%%%%%%%%%%%%%%%%%
\subsubsection{SAC}
%%%%%%%%%%%%%%%%%%%%

To enhance exploration, SAC \cite{ziebart2010modeling, haarnoja2018soft_1, haarnoja2018soft_2} utilizes the maximum entropy objective:
\begin{equation}
     J(\pi) = \mathbb{E}_{\pi}\left[ \sum_{t=0}^{\infty} \gamma^t \big( r(s_t, a_t) + \alpha \mathcal{H}(\pi(\cdot \mid s_t)) \big) \right],
    \label{eq:max_entropy_objective}
\end{equation} where $\alpha$ is the temperature parameter that controls the relative importance of the entropy term in relation to the reward and $\mathcal{H}(p)$ denotes the entropy of a distribution $p$. 
SAC typically employs the min-Q target to reduce overestimation bias. Then, the loss function for the actor is defined as
\begin{align}
    L_\pi(\phi) &= \mathbb{E}_{s \sim \mathcal{D}, \epsilon \sim \mathcal{N}(0, 1)} \bigg[ \alpha \log \pi_{\phi}(f_{\phi}(\epsilon ; s)\mid s)  \nonumber\\
    &\quad\quad\quad - \min_{i \in \{ 1, 2 \}} Q_{\theta_{i}}(s, f_{\phi}(\epsilon ; s)) \bigg], 
    \label{eq:sac_policy_update}
\end{align} where $f_{\phi}(\epsilon; s)$ denotes the reparameterization function to sample an action. The loss function for the critic is same as that of DDPG in Equation \ref{eq:ddpg_q_update}. However, the target $y$ is redefined by combining the maximum entropy approach and the min-Q target:
\begin{gather}
    y = r  + \gamma \left[ \min_{i \in \{ 1, 2 \}} Q_{\bar{\theta}_i}(s', a') - \alpha \log \pi_{\phi}(a' \mid s') \right],\nonumber \\ 
    a' \sim \pi_{\phi}( \cdot \mid s').
    \label{eq:sac_target}
\end{gather}  
The parameters of the target critic networks $\bar{\theta}_i$'s are updated using a soft update mechanism. Regarding the temperature parameter $\alpha$, it is adaptively chosen to keep the desired level of the entropy in the policy \cite{haarnoja2018soft_2}. To emphasize the use of min-Q target, throughout the paper, SAC is referred to as SAC(min-Q).

%%%%%%%%%%%%%%%%%%%%
\subsubsection{TQC}
%%%%%%%%%%%%%%%%%%%%%

Building on the distributional perspective \cite{bellemare2017distributional}, \citet{kuznetsov2020controlling} proposed TQC by extending QR-DQN \cite{dabney2018distributional}, originally developed for discrete control, to continuous control. In this extension, TQC incorporates the maximum entropy framework (e.g., SAC) \cite{haarnoja2018soft_2} to further enhance the performance in continuous control tasks. For any positive integer $N$, we let $[N]:=\{1,2,...,N\}$.

Distributional RL aims to estimate the distribution of a random return $Z^\pi(s, a)=\sum_{t=0}^{\infty} \gamma^t r(s_t, a_t)$, where $s_0=s, a_0=a$ and $s_{t+1}\sim p(\cdot|s_t,a_t), a_t \sim \pi(\cdot|s_t)$, instead of the Q-function $Q^\pi(s, a) = \mathbb{E}[Z^\pi(s, a)]$. To this end, TQC defines the $N$ critic networks parameterized by $\{\theta_n: n \in [N]\}$, each of which generates $M$ atoms $\{\kappa_{\theta_{n}}^{m}(s,a): m \in [M]\}$ for an action-state pair $(s,a)$. Using them, the $N$ approximators of the distribution of $Z^\pi(s, a)$ are defined as
\begin{equation}
    Z_{\theta_n}(s, a) = \frac{1}{M} \sum_{m=1}^M \delta\left( \kappa_{\theta_n}^m(s, a) \right),\; n \in [N],
    \label{eq:tqc_q_distribution}
\end{equation} where $ \delta(\cdot)$ denotes the Dirac delta function. To learn the critic network, the target distribution is defined as 
\begin{gather}
    Y(s, a) = \frac{1}{kN} \sum_{i = 1}^{kN} \delta(y_i(s, a)),
    \label{eq:tqc_target_distribution}
\end{gather} where $k$ represents the number of atoms selected per network, and $y_i$ is defined as:
\begin{gather}
     y_i(s, a) = r + \gamma \left[ z_{(i)}(s', a') - \alpha \log \pi_{\phi}(a' \mid s') \right]\nonumber\\
     a' \sim \pi_{\phi}( \cdot \mid s').
\end{gather} Herein, $z_{(i)}(s', a')$ is the $i$-th smallest element in the following atom set:
\begin{equation}
    \mathcal{Z}(s', a') = \left\{ \kappa_{\bar{\theta}_n}^{m}(s', a') \mid 
    m\in [M], n\in [N] 
    \right\},
    \label{eq:tqc_atom_set}
\end{equation} where $\{\bar{\theta}_{n}: n \in [N]\}$ denote the parameters of the target critic networks which are updated via soft update mechanism. The parameters $\{\theta_{n}: n \in [N]\}$ of the critic networks are updated by using the target distribution in Equation~\ref{eq:tqc_target_distribution} and the Huber quantile loss function \cite{huber1992robust, dabney2018distributional, kuznetsov2020controlling}. Also, to learn the actor network, the loss function in Equation~\ref{eq:sac_policy_update} is modified as
\begin{gather}
    L_\pi(\phi) = \mathbb{E}_{s \sim \mathcal{D}, \epsilon \sim \mathcal{N}(0, 1)} \bigg[ \alpha \log \pi_{\phi}(f_{\phi}(\epsilon ; s) \mid s)  \nonumber \\
    \qquad \qquad \qquad - \frac{1}{MN} \sum_{n = 1}^{N}\sum_{m=1}^{M} \kappa_{\theta_n}^m(s , f_{\phi}(\epsilon ; s)) \bigg]. 
    \label{eq:tqc_policy_loss}
\end{gather}

% %%%%%%%%%%%%%%%%%%%%%%%%%%%%%%%%%%%%%%%%%%%%
% \begin{figure}[t]
%     \centering  
%     \includegraphics[width=0.99\linewidth]{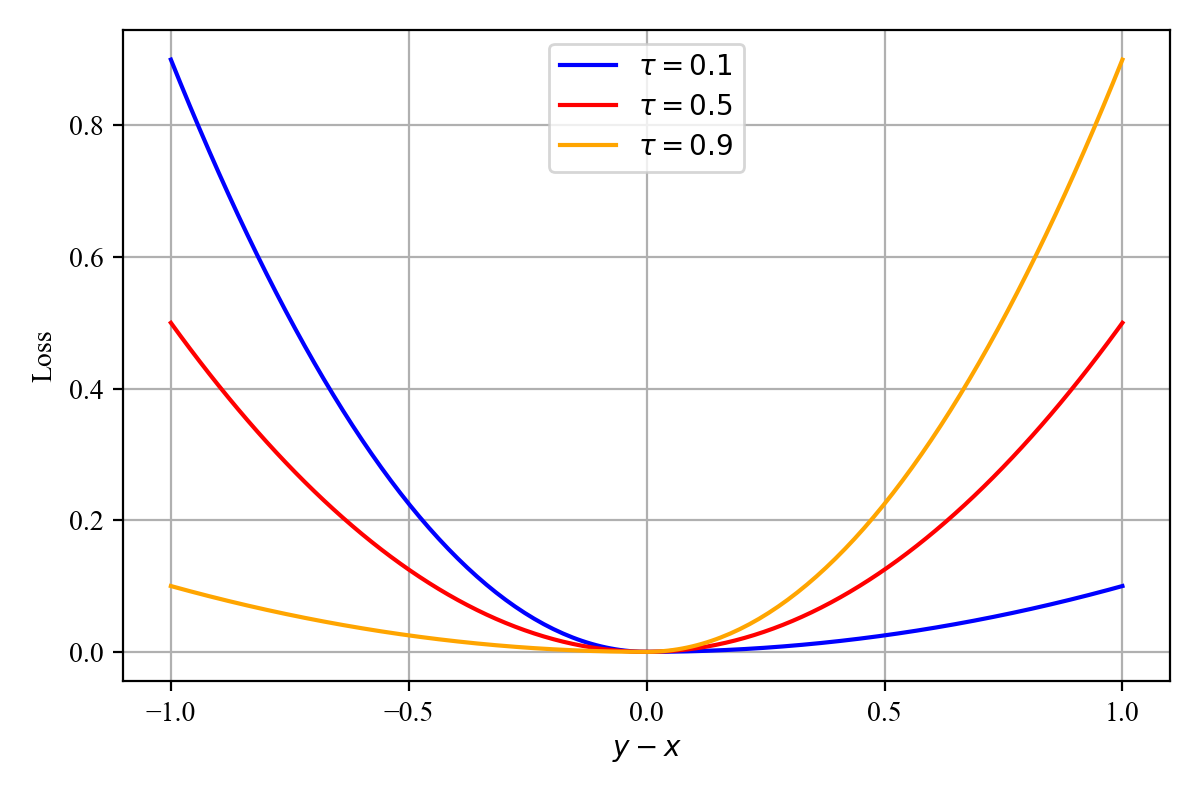} 
%     \caption{The expectile loss function at various expectile levels.}
%     \label{fig:expectile_loss_function}
% \end{figure}
% %%%%%%%%%%%%%%%%%%%%%%%%%%%%%%%%%%%%

%%%%%%%%%%%%%%%%%%%%%%%%%%%%%%%%%%%%%%%%%%%%
\begin{figure}[t]
    \centering  
    \includegraphics[width=0.99\linewidth]{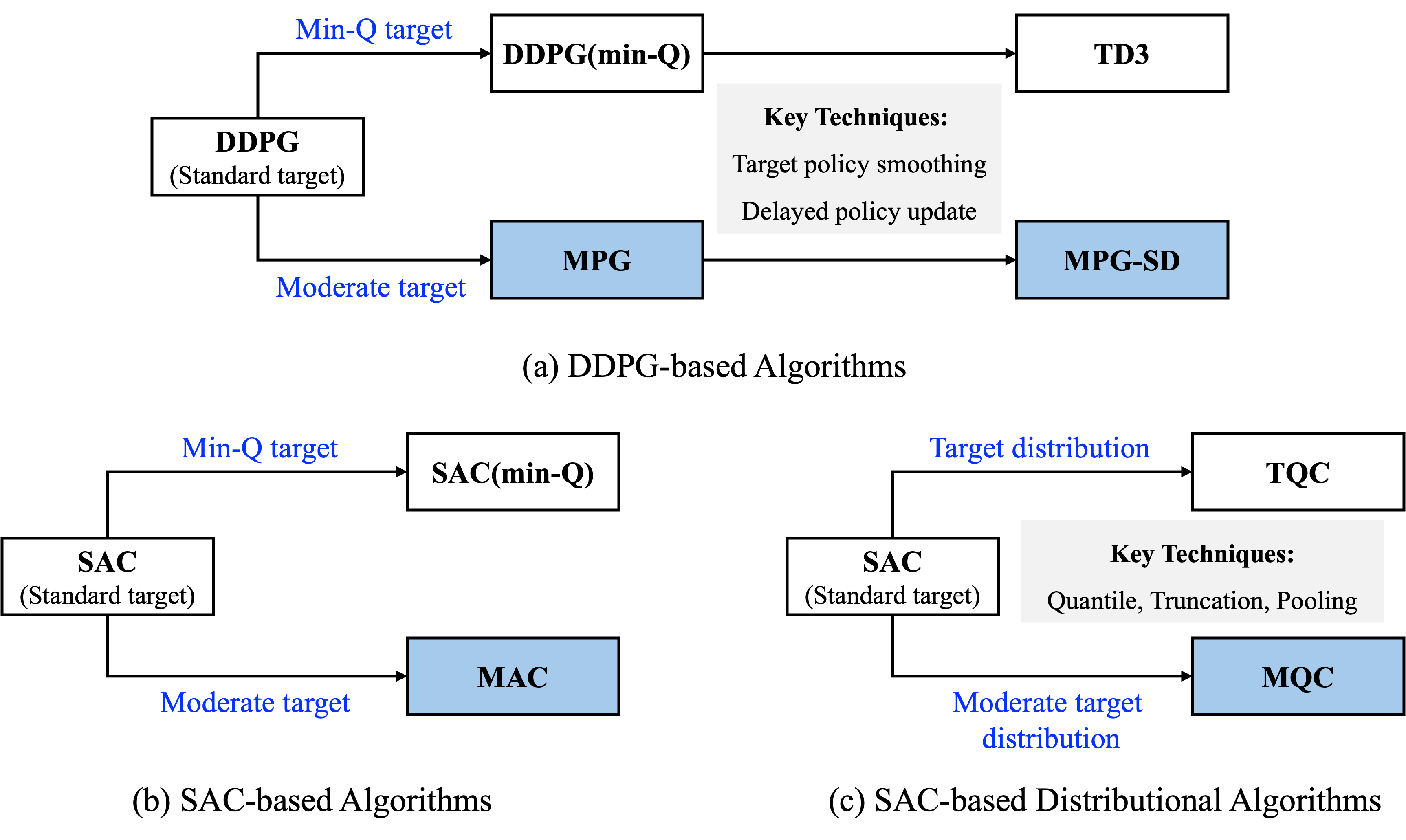} 
    \caption{Relationships between the proposed and benchmark algorithms, where the shaded boxes represent our algorithms.}
    \label{fig:algorithms}
\end{figure}
%%%%%%%%%%%%%%%%%%%%%%%%%%%%%%%%%%%%

%%%%%%%%%%%%%%%%%%%%%%%%%%%%%%%%%%%%%%%%%%
\section{Algorithms}\label{sec:algorithm}
%%%%%%%%%%%%%%%%%%%%%%%%%%%%%%%%%%%%%%%%%%

We first present a novel moderate target designed to effectively mitigate overestimation bias. Integrating this target into DDPG and SAC, we establish {\bf M}oderate {\bf P}olicy {\bf G}radient ({\bf MPG}) and {\bf M}oderate {\bf A}ctor {\bf G}radient ({\bf MAC}), respectively. In addition, we apply our moderate target to the state-of-the-art (SOTA) distributional RL, dubbed TQC, resulting in the development of {\bf M}oderate {\bf Q}uantile {\bf C}ritics ({\bf MQC}). The detailed procedures of the proposed algorithms are provided in the supplementary material.

\begin{figure*}
    \centering
    \begin{subfigure}[b]{0.195\textwidth}
         \centering
         \includegraphics[width=\textwidth]{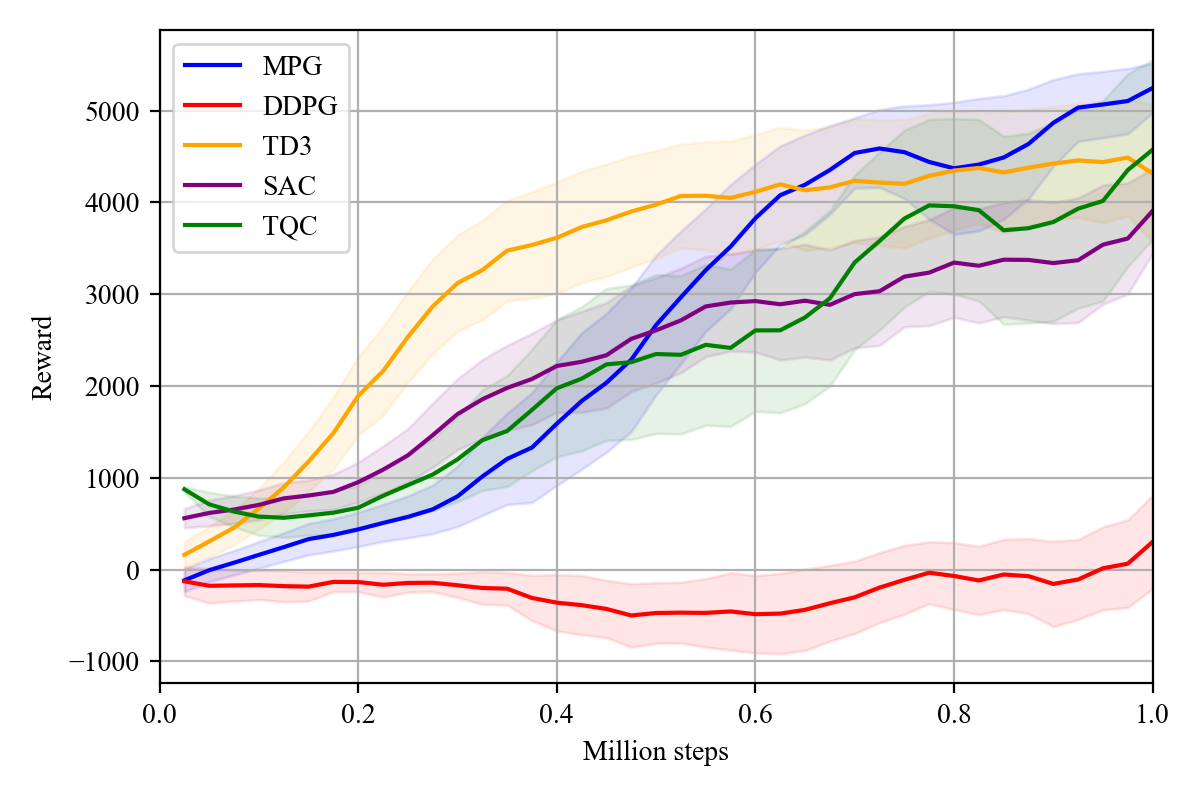}
         \caption{Ant-v4}
    \end{subfigure}
    \hfill
    \begin{subfigure}[b]{0.195\textwidth}
         \centering
         \includegraphics[width=\textwidth]{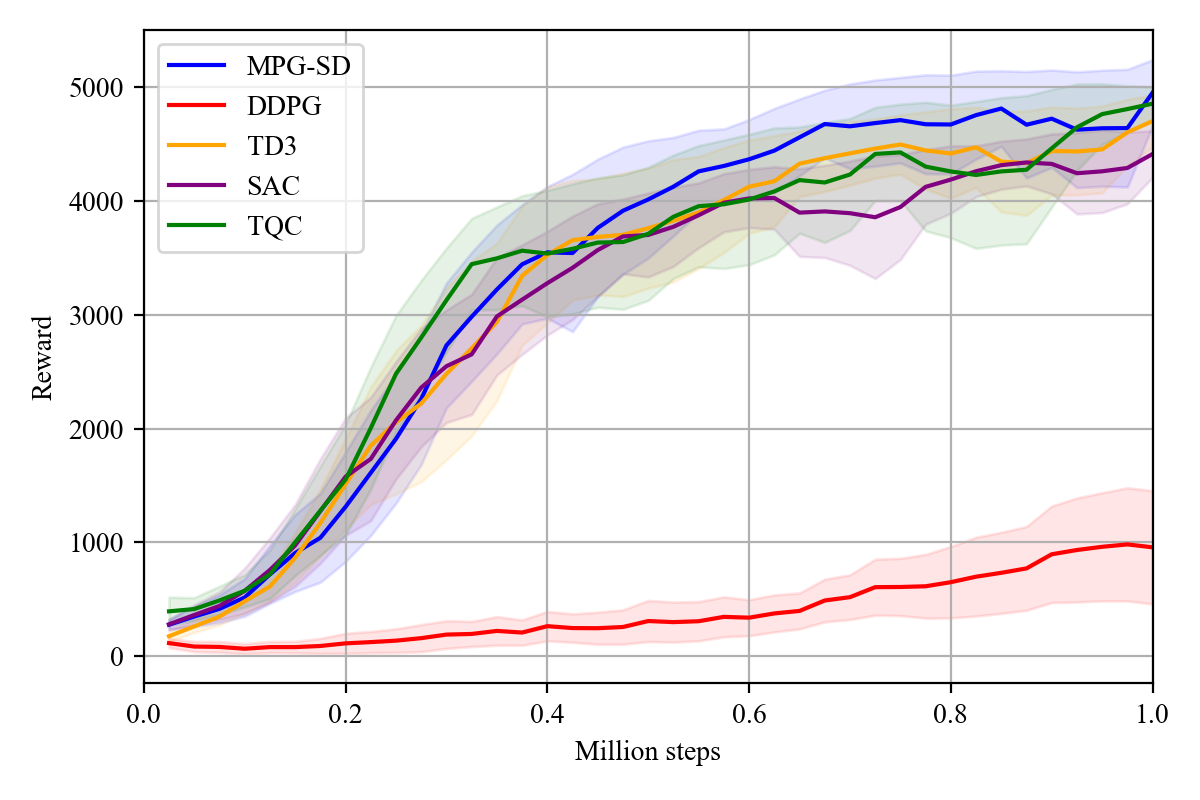}
         \caption{Walker2d-v4}
    \end{subfigure}
    \hfill
    \begin{subfigure}[b]{0.195\textwidth}
         \centering
         \includegraphics[width=\textwidth]{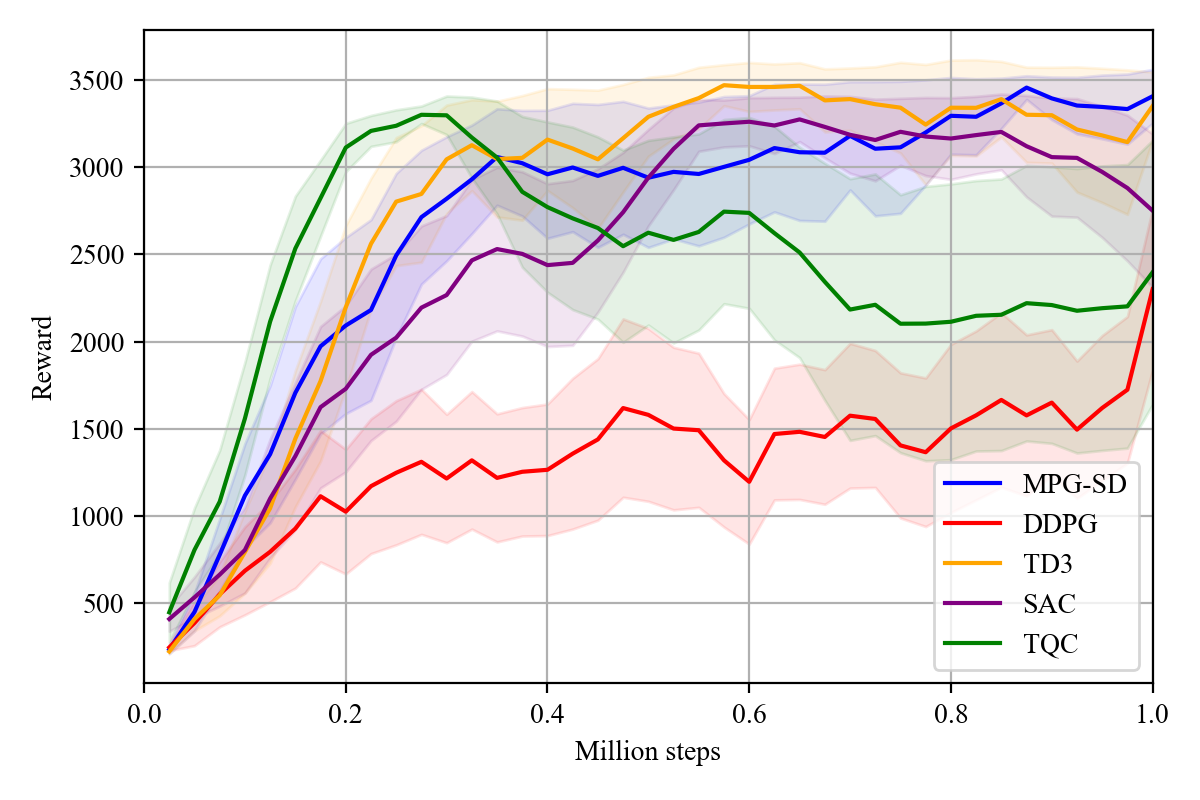}
         \caption{Hopper-v4}
    \end{subfigure}
    \hfill
    \begin{subfigure}[b]{0.195\textwidth}
         \centering
         \includegraphics[width=\textwidth]{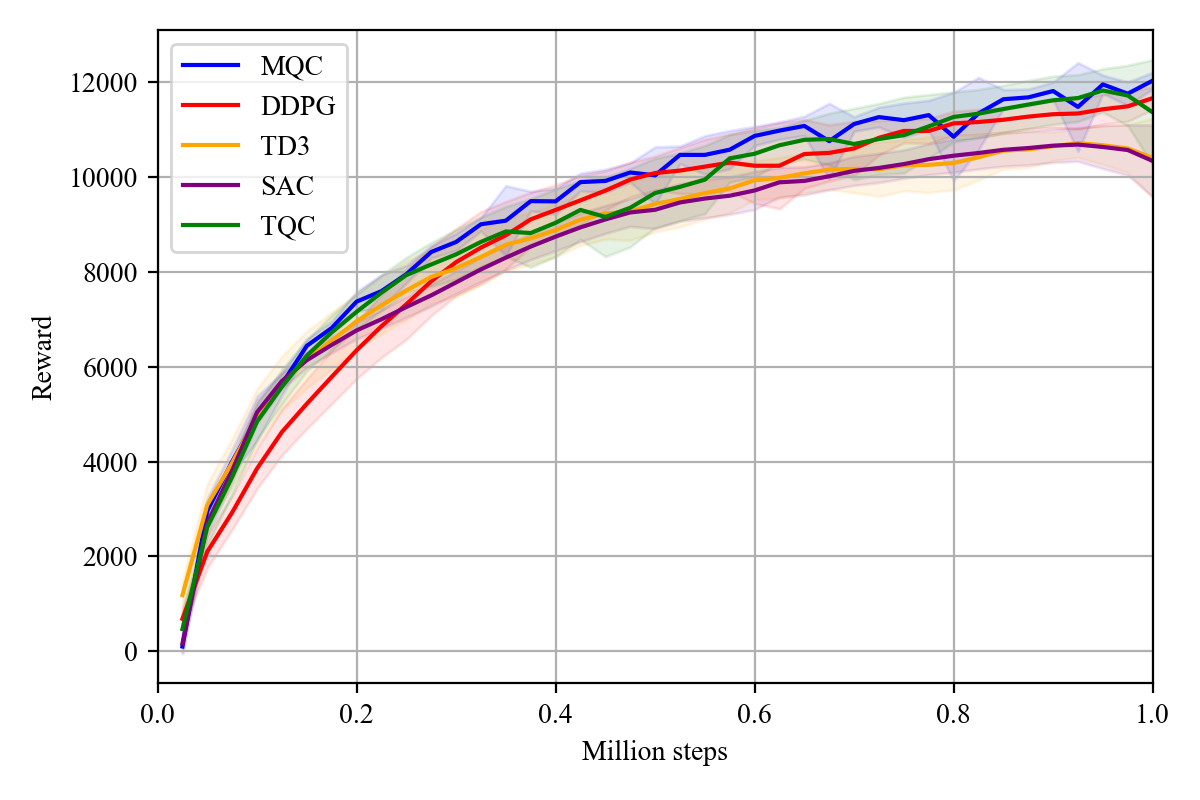}
         \caption{HalfCheeetah-v4}
    \end{subfigure}
    \hfill
    \begin{subfigure}[b]{0.195\textwidth}
         \centering
         \includegraphics[width=\textwidth]{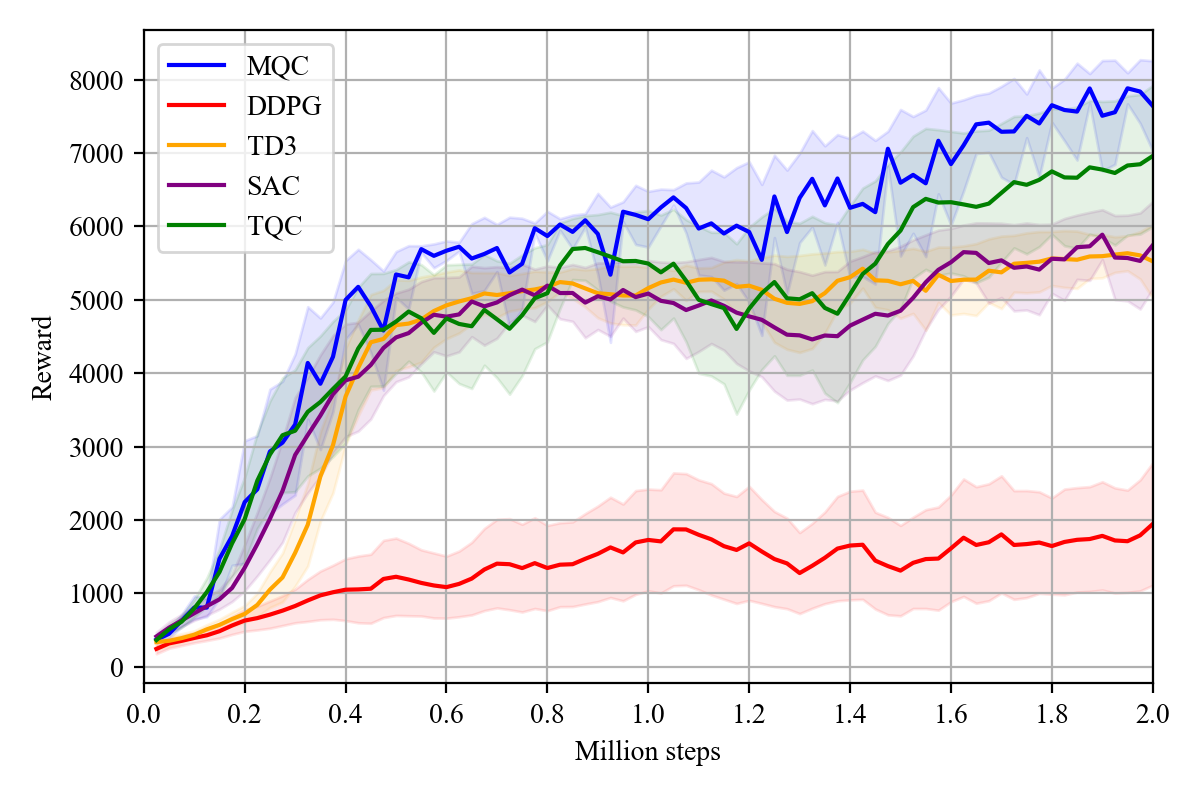}
         \caption{Humanoid-v4}
    \end{subfigure}
    \caption{Learning curves for MuJoCo continuous control tasks. The solid lines denote the average rewards and the shaded areas indicate half the standard deviation of the average evaluations over five episodes. Curves are smoothed with a moving average window for clarity.}
    \label{fig:simulation_graph}
\end{figure*}
%for visual clarity

%%%%%%%%%%%%%%%%%%%%%%%%%%%%%%%%%%%%%%%%%%%%%%%
\subsection{Protester and Moderate Target}\label{subsec:protester}
%%%%%%%%%%%%%%%%%%%%%%%%%%%%%%%%%%%%%%%%%%%%%%%

An expectile \cite{newey1987asymmetric} is the notion to describe the distribution of a random variable. The expectile at level $\tau \in (0,1)$ is defined as the minimizer of the asymmetrically weighted squared-error loss function:
\begin{equation}
\hat{y}=\argmin_{x}\EE\left[\ell_\tau(Y,x)\right],\label{opt:expectile}
\end{equation} where the expectation is taken over the distribution of the random variable $Y$ and
\begin{equation}
    \ell_\tau(y, x) = 
    \begin{cases}
        \tau(y - x)^2          & \text{if} \quad y \ge x    \\
        (1-\tau)(y - x)^2      & \text{if} \quad y < x   
    \end{cases}.
    \label{eq:expectile_loss}
\end{equation} This loss function is the basis for the expectile regression, generalizing the classical linear regression in terms of the predicted variable distribution. For $\tau = 0.5$, the loss function in Equation~\ref{opt:expectile} corresponds to the standard mean squared error (MSE) (equivalently, $\hat{y}=\EE[Y]$). For $\tau < 0.5$ and $\tau > 0.5$, on the other hand, it becomes more flexible, placing greater emphasis on the lower or upper tail of the distribution, respectively. 
% Figure~\ref{fig:expectile_loss_function} depicts the loss function in Equation~\ref{eq:expectile_loss} at various expectile levels. 
Consequently, expectile regression can be used to estimate the lower or upper distribution of the Q-values according to the choices of the expectile level $\tau$.

By leverage the expectile at level $\tau \in (0,1)$, we introduce a generalized state-value function:
\begin{equation}
    V^{\pi}_\tau(s) = \argmin_{v} \EE_{a \sim \pi(\cdot|s)}\left[\ell_{\tau}\left(Q^{\pi}(s,a), v \right) \right].\label{eq:e-loss}
\end{equation} 
When $\tau=0.5$, it is equivalent to the standard state-value function, i.e., $V_{\tau=0.5}^{\pi}(s)=\EE_{\pi}[Q^{\pi}(s,a)]$. By controlling the expectile level $\tau$, we can estimate the various aspects of the Q-value distribution conditioned on the state $s$. To control overestimation bias, we will use the $V^{\pi}_\tau(s)$ with a sufficiently small $\tau$ as follows. As shown in Equation~\ref{eq:OB}, overestimation bias occurs due to the fact that %expectation추가
\begin{gather}
     Q^{\star}(s_{t+1}, a_{t+1}^{\star})  \leq \EE \left[ \max_{a'\in\Ac}\; {Q}_{\theta}(s_{t+1}, a')\right]\nonumber\\
      a_{t+1}^{\star} = \argmax_{a' \in \mathcal{A}}\; Q^{\star}(s_{t+1}, a'). \label{eq:up}
\end{gather}
With a sufficiently small $\tau$ (e.g., $\tau=10^{-2}$), it is highly likely that
\begin{gather}
 V_{\tau=10^{-2}}(s_{t+1}) 
 <  Q^{\star}(s_{t+1}, a_{t+1}^{\star})\nonumber\\
  V_{\tau}(s) = \argmin_{v}\; \EE_{a \sim \pi(\cdot|s)}\left[\ell_{\tau}\left(Q_{\theta}(s,a), v \right) \right]. \label{eq:lo}
\end{gather} 
Then, there certainly exists $\omega\in[0,1]$ such that
% {\RED 
\begin{align}
    &Q^{\star}(s_{t+1}, a_{t+1}^{\star})\nonumber\\
    &= (1-\omega)\EE\left[\max_{a'\in\Ac} {Q}_{\theta}(s_{t+1}, a')\right]+\omega V_{\tau=10^{-2}}(s_{t+1}).
\end{align}
% } 
By appropriately selecting a cautious weight $\omega \in [0,1]$, we can successfully address overestimation bias.
As depicted in Figure 1,  we propose an expectile value network, designated as {\bf protester}, to effectively incorporate this concept within the actor-critic network. Throughout the paper, it is represented as the parameterized function $V_{\psi}(\cdot)$.
From Equation~\ref{eq:lo}, the parameter $\psi$ is optimized using the {\em expectile} loss function:
\begin{equation}
    L_{V}(\psi) = \mathbb{E}_{(s, a) \sim \mathcal{D}} \left[ \ell_\tau\left(Q_\theta(s, a), V_\psi(s)\right) \right].
    \label{eq:proposed_expectile_value_loss}
\end{equation}

To mitigate overestimation bias, we present a moderate target, which is defined as a convex combination of the standard target Q-value (i.e., $Q_{\bar{\theta}}$) and the expectile value (i.e., $V_{\psi}$). Note that the former is an overestimated Q-function and the latter is the lower bound of the Q-function. Thus, properly selecting the combination weight, the resulting Q-values in the moderate target can closely approximate the corresponding true Q-values. The proposed moderate target is defined as
\begin{gather}
    y_{\rm mt} = r + \gamma \left[ (1 - \omega) Q_{\bar{\theta}}(s', a') + \omega  V_{\psi}(s') \right] \nonumber \\
    a' = \pi_{\bar{\phi}}(s'), \label{eq:moderate_target}
\end{gather} where $\bar{\phi}$ and $\bar{\theta}$ denote the parameters of the target actor and critic networks, respectively, and $\psi$ denotes the parameter for the protester.
The hyperparameter $\omega \in [0,1]$ represents a cautious weight that is determined according to the degree of overestimation bias. For instance, selecting a larger $\omega$ results in a more cautious target, which is particularly suitable for environments with severe overestimation bias. Therefore, our moderate target refines the standard target in Equation~\ref{eq:ddpg_q_target}, effectively alleviating the overestimation of Q-value estimates.

%%%%%%%%%%%%%%%%%%%%%%%%%%%%%%%%%%%%%%%%%%%%%
\subsection{Moderate Policy Gradient (MPG)} %
%%%%%%%%%%%%%%%%%%%%%%%%%%%%%%%%%%%%%%%%%%%%%

We describe the proposed MPG, which is developed by incorporating the moderate target into DDPG. Leveraging the moderate target $y_{\rm mt}$ in Equation~\ref{eq:moderate_target}, the critic network is optimized with the loss function:
\begin{equation}
    L_{Q}(\theta) = \mathbb{E}_{(s, a, r, s') \sim \mathcal{D}} \left[ \left( y_{\rm mt} - Q_\theta(s, a) \right)^2 \right].
    \label{eq:proposed_critic_loss}
\end{equation} 
It is important to note that, as the moderate target employs the lower expectile of the Q-value distribution, it is generally less than the standard target. Consequently, the estimated Q-value in our MPG tends to be lower than that in DDPG, thus enabling a more stable decision-making policy. The actor network is optimized exactly following the procedures of DDPG, as outlined in  Equation~\ref{eq:ddpg_policy_update}. Nonetheless, since our estimated Q-value is more conservative than that in DDPG, the actor network is trained to make a decision in a more cautious way.
The target critic and actor networks are updated via a soft update mechanism, as described in Equation~\ref{eq:target_update}.

To further mitigate overestimation bias, MPG is combined with the variance-reduction techniques in TD3 \cite{fujimoto2018addressing}, including target policy smoothing and delayed policy updates. The actor, target actor, and target critic networks are updated at every $d\geq 1$ steps. Also, the moderated target is modified by adding a clipped random noise:
\begin{gather}
    y_{\rm mt} = r + \gamma \left[ (1 - \omega) Q_{\bar{\theta}}(s', a') + \omega V_{\psi}(s') \right] \nonumber \\
    a' = \pi_{\bar{\phi}}(s') + \bar{\epsilon}, \label{eq:moderate_target_for_td3}
\end{gather} where $\bar{\epsilon} \sim \mbox{clip}(\Nc(0,\sigma^2),-c,c)$. The resulting algorithm is named MPG-SD. For $d=1$ and $\bar{\epsilon}=0$, it reduces to MPG.

%%%%%%%%%%%%%%%%%%%%%%%%%%%%%%%%%%%%%%%%%
\subsection{Moderate Actor Critic (MAC)}

We describe the proposed MAC, which is constructed by replacing the min-Q target in SAC(min-Q) with our moderate target. To this end, one of the two critics in SAC(min-Q) is changed into the protester, described in Equation~\ref{eq:proposed_expectile_value_loss}. Applying the maximum entropy object to the moderate target in Equation~\ref{eq:moderate_target}, the target in MAC is defined as
\begin{gather}
    y_{\rm mt} = r  + \gamma \Big[ (1 - \omega) Q_{\bar{\theta}}(s', a') + \omega V_{\psi}(s')\nonumber\\
    \qquad \qquad \;\;\;\;\; - \alpha \log \pi_{\phi}(a' \mid s') \Big], \;\;\; a' \sim \pi_{\phi}( \cdot \mid s').\label{eq:mac_target}
\end{gather}  Since MAC uses the single critic, the loss function of the actor is redefined as
\begin{align}
    L_\pi(\phi)&= \mathbb{E}_{s \sim \mathcal{D}, \epsilon \sim \mathcal{N}(0, 1)} \Big[ \alpha \log \pi_{\phi}(f_{\phi}(\epsilon; s) \mid s)\nonumber\\
    &\quad\quad\quad\quad\quad\quad\quad\quad\quad\quad - Q_{\theta}(s, f_{\phi}(\epsilon; s)) \Big].\label{eq:mac_policy_update} 
\end{align}

%%%%%%%%%%%%%%%%%%%%%%%%%%%%
%  TABLE
%%%%%%%%%%%%%%%%%%%%%%%%%%%%
\begin{table*}[t]
    \centering
     \caption{Average reward and standard deviation calculated after training over five episodes for each algorithm, with training conducted across five different seeds. The maximum value for each task is highlighted in bold.
     }
    \label{table:simulation_performance_result}
    \renewcommand{\arraystretch}{1.2}
        \resizebox{\textwidth}{!}{
        {\small
        \begin{tabular}{ c c c c c c c c} 
        \toprule
            & \bfseries Ant-v4 & \bfseries Walker2d-v4 & \bfseries Hopper-v4  & \bfseries HalfCheetah-v4 & \bfseries Humanoid-v4 & \bfseries Average \\ \midrule
         DDPG       &  302.8 ± 1021.1   & 956.3 ± 996.0 & 2300.4 ± 924.5 & 11660.5 ± 477.5  & 1944.5 ± 1669.0 & 3432.90 ± 1017.62 \\
         DDPG(min-Q)  & 3541.4 ± 726.0    & 4042.5 ± 1697.5  & 2844.9 ± 885.4  & 10279.1 ± 271.3  & 5258.8 ± 422.9 & 5193.34 ± 800.62  \\
        \bfseries MPG &\bfseries 5247.5 ± 539.1  & 4826.2 ± 461.4 & 3143.5 ± 718.4 & 10754.5 ± 418.5 &  5301.3 ± 79.5 & 5854.60 ± 443.38 \\
        \midrule
       TD3    & 4315.9 ± 1499.1   & 4703.3 ± 456.6 & 3350.4 ± 391.2  & 10401.4 ± 1668.8  & 5524.8 ± 1007.1 & 5659.16 ± 1004.56 \\
        \bfseries MPG-SD  & 4687.4 ± 1147.5   &\bfseries 4953.3 ± 578.7 &\bfseries 3406.4 ± 308.4 & 11282.1 ± 354.8 & 5369.6 ± 195.0 & 5939.76 ± 516.88 \\ \midrule
         SAC(min-Q)  & 3908.3 ± 921.8   & 4414.0 ± 409.5 & 2750.5 ± 880.5 & 10337.5 ± 1517.0 & 5743.9 ± 1190.2 & 5430.84 ± 983.80 \\ 
        \bfseries MAC  & 4579.8 ± 833.5   &  3933.1 ± 1023.9 & 2905.8 ± 639.9 & 11339.7 ± 280.4  & 5826.4 ± 779.6 & 5716.96 ± 711.46  \\ 
        \midrule
        TQC  & 4575.5 ± 1960.3   & 4856.3 ± 294.0   & 2397.3 ± 1513.0 & 11361.7 ± 2218.7 &  6959.8 ± 1925.8 &  6030.12 ± 1582.36 \\ 
        \bfseries MQC  & 4861.3 ± 1577.9   & 4418.9 ± 676.0   & 3352.3 ± 596.1 &\bfseries 12029.1 ± 350.0 & \bfseries 7643.2 ± 1229.8 & \bfseries 6460.96 ± 885.96 \\ 

         \bottomrule
        \end{tabular}
        }
        }
\end{table*}

%%%%%%%%%%%%%%%%%%%%%%%%%%%%%%%%%%%%%%%%%%%%%%
\subsection{Moderate Quantile Critics (MQC)}

We describe the proposed MQC, which appropriately incorporates the moderate target in Equation \ref{eq:moderate_target} into TQC \cite{kuznetsov2020controlling}. To train the protester in our MQC, 
the expectile loss function in Equation~\ref{eq:proposed_expectile_value_loss} is modified as
\begin{gather}
    L_{V}(\psi) = \mathbb{E}_{(s, a) \sim \mathcal{D}} \left[ \ell_\tau\left(Q_{\theta}(s, a), V_\psi(s)\right) \right] \nonumber \\
    Q_{\theta}(s,a) = \min \{\kappa_{\theta_n}^m(s, a): m \in [M], n \in [N]\},
    \label{eq:m_tqc_protester_loss}
\end{gather} where $\{\kappa_{\theta_n}^m: m \in [M]\}$ denotes the $M$ atoms of the distribution $Z_{\theta_{n}}(s,a)$ in Equation~\ref{eq:tqc_q_distribution}. As in the proposed MPG and MAC, the protester can be used to reduce overestimation bias in the target distribution in Equation~\ref{eq:tqc_target_distribution}. In MQC, the moderate target distribution is defined as
\begin{gather}
    Y_{\text{mt}}(s, a) = \frac{1}{kN} \sum_{i = 1}^{kN} \delta(y_{\text{mt},i}(s, a)),
    \label{eq:m_tqc_target_distribution} 
\end{gather} where the moderate atoms are given as
\begin{gather}
     y_{\text{mt},i}(s, a) = r + \gamma \Big[ (1 - \omega) z_{(i)}(s', a') +  \omega V_{\psi}(s')  \nonumber \\ 
    \quad\quad\quad\quad - \alpha \log \pi_{\phi}(a' \mid s') \Big], \;\; a' \sim \pi_{\phi}(\cdot|s').
\end{gather}
The loss function for the actor network in MQC is identical to that in TQC, as delineated in Equation~\ref{eq:tqc_policy_loss}.

%%%%%%%%%%%%%%%%%%%%%%%%%%%%%%%%%%%%%%
\subsection{Discussions} 
%%%%%%%%%%%%%%%%%%%%%%%%%%%%%%%%%%%%%%

We discuss some advantages of our moderate target.

\textbf{Simplicity and Flexibility} Our moderate target offers an intuitive solution to the issue of overestimation by simply modifying the update mechanism of the Q-function. Moreover, the proposed algorithms—MPG, MPG-SD, MAC, and MQC—exhibit complexities comparable to their respective underlying algorithms. For instance, MPG-SD is one of the simplest actor-critic algorithms in continuous control environments while yielding an attractive performance by mitigating overestimation bias. Moreover, the conservativeness of our algorithms can be easily adjusted to align with characteristics of the environment by simply tuning the cautious parameter $\omega$.

\textbf{Robustness} Our moderate target employs the lower expectile of the Q-value distribution conditioned on any given state, thus integrating the associated risk information effectively. This risk information is often overlooked in conventional RL algorithms; however, its consideration is vital for the development of robust and stable RL algorithms. As an example, consider a state $s \in \Sc$ having both a high Q-value action $a_1\in\Ac$ and a low Q-value action $a_2\in \Ac$. A typical agent is trained to select the action $a_1$ under the premise that the environment does not contain any uncertainty. In practice, however, unseen events in the environment might lead to the unintended action $a_2$ with some probability (e.g., drone controls) \cite{wang2021online}. By leveraging the moderate target, such potential risk can be integrated into our Q-value estimates, thereby improving the robustness of our algorithms in environments with uncertainty.

\textbf{Expandability} In addition to DDPG, SAC, TD3, and TQC, our moderate target in Equation~\ref{eq:moderate_target} can be readily incorporated into other RL algorithms, such as Q-learning \cite{watkins1992q} and DQN \cite{van2016deep} for discrete control tasks, as well as A2C \cite{mnih2016asynchronous} and PPO \cite{schulman2017proximal} for continuous control tasks. Furthermore, we show that the expectile at an appropriate level $\tau \in (0,1)$ can effectively estimate the various aspects of the Q-value distribution. By simply modifying the expectile level, for example, it is feasible to estimate the upper expectile as well. This capability can further expand the MF-RL algorithms, enabling them to adapt to various environments and significantly enhance overall operational efficiency.

%%%%%%%%%%%%%%%%%%%%%%%%%%%%%%%%%%%%%%%%%%%%
\section{Experiments}\label{sec:experiments}
%%%%%%%%%%%%%%%%%%%%%%%%%%%%%%%%%%%%%%%%%%%%

As benchmark algorithms, we employ SOTA actor-critic methods, including 
DDPG, DDPG(min-Q), SAC(min-Q), TD3, and TQC, owing to their attractive performance and stability in continuous control tasks. Figure~\ref{fig:algorithms} depicts the relationships between the benchmark algorithms and our algorithms. We employ Stable Baselines3 \cite{stable-baselines} framework for our experiments, adhering to the standard hyperparameter settings in RL Zoo3 \cite{rl-zoo3}. Regarding our algorithms, we establish the expectile level at $\tau = 0.01$ to ensure the protester's focus towards the lower expectile. Following the analysis in Section~\ref{sec:ablation_studies}, the cautious weights are assigned as follows: $\omega = 0.2$ for MPG and MPG-SD, $\omega = 0.13$ for MAC, and $\omega = 0.01$ for MQC. 
% {\RED The source code for the implementation of our proposed algorithms is available online.\footnote{\url{https://github.com/ukjohwang/moderate-rl}} }

%%%%%%%%%%%%%%%%%%%%%%%%%%
\subsection{Evaluation}
%%%%%%%%%%%%%%%%%%%%%%%%%%%

We conduct various experiments within the MuJoCo environment \cite{todorov2012physics}, which is widely recognized as a standard benchmark for continuous control tasks. Provided by OpenAI Gym \cite{brockman2016openai}, MuJoCo offers several challenging tasks that rigorously evaluate the performances of both the proposed and benchmark algorithms. Each task is run for 1 million or 2 million time steps with evaluations every 25,000 time steps, where each evaluation reports the average reward over the 5 episodes. Our results are reported over 5 random seeds of the Gym simulator and the network initialization. 
The corresponding results are depicted in Figure~\ref{fig:simulation_graph}. Also, the average reward and the standard deviation over the 5 episodes, evaluated at the last time step, are summarized in Table~\ref{table:simulation_performance_result}.
% {
% \RED
% The corresponding results are summarized in Table~\ref{table:simulation_performance_result}.
% }
Our experiments demonstrate that the proposed algorithms outperform the corresponding counterparts. Specifically, MPG, MPG-SD, MAC, and MQC show superior performance compared to DDPG, TD3, SAC(min-Q), and TQC, respectively. These results confirm that our moderate target effectively addresses the overestimation problem. Our algorithms exhibit lower variances (i.e., greater stability) than their counterparts, and improve both performance and variance without increasing computational complexity, as detailed in Section~\ref{sec:ablation_studies}.

%%%%%%%%%%%%%%%%%%%%%%%%%%%%%%%%%%%%%%%%%%%%%%%%%%%%%%%%%%
\subsection{Ablation Studies}\label{sec:ablation_studies}
%%%%%%%%%%%%%%%%%%%%%%%%%%%%%%%%%%%%%%%%%%%%%%%%%%%%%%%%%%

We conduct ablation studies to evaluate the advantages of our moderate target.

{\bf Efficiency} One may raise concern that our algorithms introduce additional complexity compared to baseline counterparts due to the use of the additional value function (i.e., the protester). However, in our approach, we mitigate this complexity by reducing the number of critics. Specifically, DDPG(min-Q), TD3, and SAC(min-Q) use two critics, whereas MPG, MPG-SD, and MAC use only one critic. Moreover, although MQC employs two critics, identical TQC, we demonstrate in the supplementary material that MQC outperforms TQC despite the increased number of critics. Consequently, our algorithms maintain a similar resource usage as the baseline algorithms. To verify this, we evaluate the resource requirements of both the proposed and benchmark methods, with a particular emphasis on training time and maximum GPU memory usage (in short, Max GPU Mem) in the Ant-v4 environment. The corresponding results are summarized in Table~\ref{table:resource_result}, thereby confirming the efficiency of our algorithms.

%%%%%%%%%%%%%%%%%%%%%%%%%%%%%%%%%%%%%
{\bf Overestimation Bias} To evaluate the overestimation bias during training, we measure the average of the estimated target critic values for various algorithms in the Ant-v4 environment. The corresponding results are depicted in Figure~\ref{fig:target_q_value}. Our findings reveal that the target critic values in DDPG exhibit significant overestimation, whereas the values in the other algorithms remain relatively stable. This discrepancy arises as DDPG does not exploit any technique to control overestimation bias. Notably, although the estimated target critic values in DDPG are significantly higher, this does not result in improved performance, as demonstrated in Table \ref{table:simulation_performance_result}. This disparity highlights the substantial impact of overestimation bias in MF-RL algorithms and emphasize the necessity of addressing this issue to enhance performance.
%%%%%%%%%%%%%%%%%%%%%%%%%%%%%%%%%%
\begin{table}[h]
    \centering
    \renewcommand{\arraystretch}{1.2}
     \caption{Training time and GPU memory usage.} \label{table:resource_result}
        {\small
        \begin{tabular}{ c c c} 
        \toprule
        \bfseries  & \bfseries Time (s) & \bfseries Max GPU Mem (MB) \\ \midrule
        DDPG      & 8364    & 24.0   \\
        DDPG(min-Q)       & 11576    & 26.6   \\
        \bfseries MPG      & 12132    & 26.0   \\ 
       TD3       & 8097    & 26.6  \\ 
        \bfseries MPG-SD       & 8589    & 26.0   \\
       SAC(min-Q)      & 20267    & 23.1   \\
        \bfseries MAC       & 20266    & 22.3  \\ 
        TQC       & 20964    & 35.4  \\ 
        \bfseries  MQC       &   24944  & 36.5  \\ 
         \bottomrule
        \end{tabular}
        }
\end{table}
%%%%%%%%%%%%%%%%%%%%%%%%%%%%%%%%%%%%%%%%%%%%%%%%%%%%%%%%%%%%%%%%%%%%%%%
\begin{figure}[t]
    \centering  
    \includegraphics[width=0.90\linewidth]{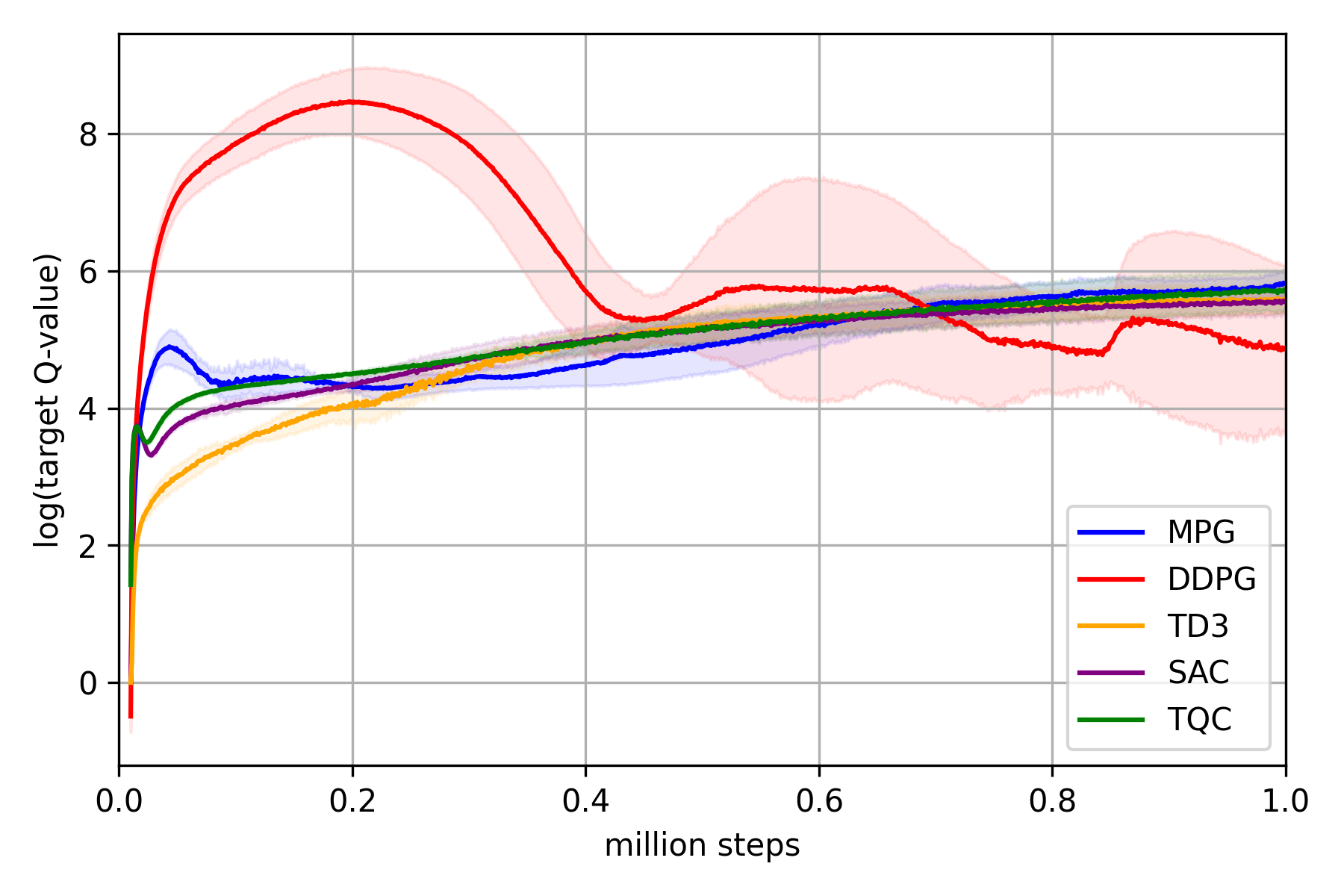} 
    \caption{Log-transformed target Q-value during training.}
    \label{fig:target_q_value}
\end{figure}
%%%%%%%%%%%%%%%%%%%%%%%%%%%%%%%%%%%%%%%%%%%%%%%%%%%%%%%%%%%%%%%%%%%%%%

{\bf Impact of the cautious weight} It is anticipated that an optimal cautious weight $\omega$ would be larger when a baseline algorithm exhibits a higher overestimation bias. In contrast to DDPG and TD3, which utilize deterministic policies, SAC employs a stochastic policy that can mitigate overestimation bias to some extent. TQC offers further mitigation of overestimation through various techniques. Thus, in our baseline methods, DDPG, TD3, SAC, and TQC cause lower overestimation bias in that order, which is also demonstrated in Table 1. From our experiments, the optimized cautious weights are determined to be $\omega=0.2$ for MPG and MPG-SD, $\omega=0.13$ for MAC, and $\omega=0.01$ for MQC. These findings confirm the aforementioned expectations and provide the guidelines on how to select a proper cautious weight.

%Proper selection of the cautious weight $\omega$ is crucial. If $\omega$ is too high, the algorithm becomes overly pessimistic, while if it is too low, it becomes too optimistic. Compared to DDPG and TD3, which use deterministic policies, SAC adopts a maximum entropy framework, resulting in a policy with higher entropy and thus being more conservative with respect to overestimation bias. As a result, the cautious weight in MAC is slightly lower than in MPG and MPG-SD. Additionally, MQC incorporates various techniques to reduce overestimation bias, such as dropping lower quantiles and using two critics. This enables MQC to handle overestimation bias to some extent, requiring a much lower cautious weight compared to the other algorithms.

%%%%%%%%%%%%%%%%%%%%%%%%%%%%%%%%%%%%%%%%%%%
\section{Conclusion}\label{sec:conclusion}

We proposed a novel moderate target designed for controlling overestimation bias. This target can be seamlessly integrated with the SOTA MF-RL algorithms and effectively combined with the existing techniques to further mitigate overestimation. Via extensive experiments, we demonstrated that our algorithms, built upon the moderate target, outperform the corresponding baseline algorithms while maintaining similar computational complexity and training time. These results highlight the potential of the moderate target as a key component to build MF-RL algorithms applicable to various large-scale environments.

%{\BLUE We developed a novel model-free RL algorithm for continuous control tasks, termed MPG, specifically designed to address the overestimation bias prevalent in existing methods. By introducing protester, MPG effectively mitigates this bias, providing a straightforward and intuitive approach compared to the more complex techniques found in other RL algorithms. Through extensive experiments, we demonstrated that MPG achieves competitive performance and efficiency when compared to established RL algorithms. These results underscore MPG's potential as a baseline RL algorithm across various applications.
%}

\section*{Impact Statement}

This paper presents work whose goal is to advance the field of Machine Learning. There are many potential societal consequences of our work, none of which we feel must be specifically highlighted here.

% In the unusual situation where you want a paper to appear in the
% references without citing it in the main text, use \nocite
\nocite{langley00}

\bibliography{example_paper}

\begin{thebibliography}{44}
\providecommand{\natexlab}[1]{#1}
\providecommand{\url}[1]{\texttt{#1}}
\expandafter\ifx\csname urlstyle\endcsname\relax
  \providecommand{\doi}[1]{doi: #1}\else
  \providecommand{\doi}{doi: \begingroup \urlstyle{rm}\Url}\fi

\bibitem[Anschel et~al.(2017)Anschel, Baram, and Shimkin]{anschel2017averaged}
Anschel, O., Baram, N., and Shimkin, N.
\newblock Averaged-dqn: Variance reduction and stabilization for deep reinforcement learning.
\newblock In \emph{International conference on machine learning}, pp.\  176--185. PMLR, 2017.

\bibitem[Arulkumaran et~al.(2017)Arulkumaran, Deisenroth, Brundage, and Bharath]{arulkumaran2017deep}
Arulkumaran, K., Deisenroth, M.~P., Brundage, M., and Bharath, A.~A.
\newblock Deep reinforcement learning: A brief survey.
\newblock \emph{IEEE Signal Processing Magazine}, 34\penalty0 (6):\penalty0 26--38, 2017.

\bibitem[Bellemare et~al.(2017)Bellemare, Dabney, and Munos]{bellemare2017distributional}
Bellemare, M.~G., Dabney, W., and Munos, R.
\newblock A distributional perspective on reinforcement learning.
\newblock In \emph{International conference on machine learning}, pp.\  449--458. PMLR, 2017.

\bibitem[Bellini \& Di~Bernardino(2017)Bellini and Di~Bernardino]{bellini2017risk}
Bellini, F. and Di~Bernardino, E.
\newblock Risk management with expectiles.
\newblock \emph{The European Journal of Finance}, 23\penalty0 (6):\penalty0 487--506, 2017.

\bibitem[Bellman(1966)]{bellman1966dynamic}
Bellman, R.
\newblock Dynamic programming.
\newblock \emph{science}, 153\penalty0 (3731):\penalty0 34--37, 1966.

\bibitem[Brockman(2016)]{brockman2016openai}
Brockman, G.
\newblock Openai gym.
\newblock \emph{arXiv preprint arXiv:1606.01540}, 2016.

\bibitem[Chen et~al.(2021)Chen, Wang, Zhou, and Ross]{chen2021randomized}
Chen, X., Wang, C., Zhou, Z., and Ross, K.
\newblock Randomized ensembled double q-learning: Learning fast without a model.
\newblock \emph{arXiv preprint arXiv:2101.05982}, 2021.

\bibitem[Dabney et~al.(2018)Dabney, Rowland, Bellemare, and Munos]{dabney2018distributional}
Dabney, W., Rowland, M., Bellemare, M., and Munos, R.
\newblock Distributional reinforcement learning with quantile regression.
\newblock In \emph{Proceedings of the AAAI conference on artificial intelligence}, volume~32, 2018.

\bibitem[Fujimoto et~al.(2018)Fujimoto, Hoof, and Meger]{fujimoto2018addressing}
Fujimoto, S., Hoof, H., and Meger, D.
\newblock Addressing function approximation error in actor-critic methods.
\newblock In \emph{International conference on machine learning}, pp.\  1587--1596. PMLR, 2018.

\bibitem[Haarnoja et~al.(2018{\natexlab{a}})Haarnoja, Zhou, Abbeel, and Levine]{haarnoja2018soft_1}
Haarnoja, T., Zhou, A., Abbeel, P., and Levine, S.
\newblock Soft actor-critic: Off-policy maximum entropy deep reinforcement learning with a stochastic actor.
\newblock In \emph{International conference on machine learning}, pp.\  1861--1870. PMLR, 2018{\natexlab{a}}.

\bibitem[Haarnoja et~al.(2018{\natexlab{b}})Haarnoja, Zhou, Hartikainen, Tucker, Ha, Tan, Kumar, Zhu, Gupta, Abbeel, et~al.]{haarnoja2018soft_2}
Haarnoja, T., Zhou, A., Hartikainen, K., Tucker, G., Ha, S., Tan, J., Kumar, V., Zhu, H., Gupta, A., Abbeel, P., et~al.
\newblock Soft actor-critic algorithms and applications.
\newblock \emph{arXiv preprint arXiv:1812.05905}, 2018{\natexlab{b}}.

\bibitem[Hasselt(2010)]{hasselt2010double}
Hasselt, H.
\newblock Double q-learning.
\newblock \emph{Advances in neural information processing systems}, 23, 2010.

\bibitem[Hill et~al.(2018)Hill, Raffin, Ernestus, Gleave, Kanervisto, Traore, Dhariwal, Hesse, Klimov, Nichol, Plappert, Radford, Schulman, Sidor, and Wu]{stable-baselines}
Hill, A., Raffin, A., Ernestus, M., Gleave, A., Kanervisto, A., Traore, R., Dhariwal, P., Hesse, C., Klimov, O., Nichol, A., Plappert, M., Radford, A., Schulman, J., Sidor, S., and Wu, Y.
\newblock Stable baselines.
\newblock \url{https://github.com/hill-a/stable-baselines}, 2018.

\bibitem[Hiraoka et~al.(2021)Hiraoka, Imagawa, Hashimoto, Onishi, and Tsuruoka]{hiraoka2021dropout}
Hiraoka, T., Imagawa, T., Hashimoto, T., Onishi, T., and Tsuruoka, Y.
\newblock Dropout q-functions for doubly efficient reinforcement learning.
\newblock \emph{arXiv preprint arXiv:2110.02034}, 2021.

\bibitem[Huber(1992)]{huber1992robust}
Huber, P.~J.
\newblock Robust estimation of a location parameter.
\newblock In \emph{Breakthroughs in statistics: Methodology and distribution}, pp.\  492--518. Springer, 1992.

\bibitem[Kiran et~al.(2021)Kiran, Sobh, Talpaert, Mannion, Al~Sallab, Yogamani, and P{\'e}rez]{kiran2021deep}
Kiran, B.~R., Sobh, I., Talpaert, V., Mannion, P., Al~Sallab, A.~A., Yogamani, S., and P{\'e}rez, P.
\newblock Deep reinforcement learning for autonomous driving: A survey.
\newblock \emph{IEEE Transactions on Intelligent Transportation Systems}, 23\penalty0 (6):\penalty0 4909--4926, 2021.

\bibitem[Kumar et~al.(2019)Kumar, Fu, Soh, Tucker, and Levine]{kumar2019stabilizing}
Kumar, A., Fu, J., Soh, M., Tucker, G., and Levine, S.
\newblock Stabilizing off-policy q-learning via bootstrapping error reduction.
\newblock \emph{Advances in neural information processing systems}, 32, 2019.

\bibitem[Kuznetsov et~al.(2020)Kuznetsov, Shvechikov, Grishin, and Vetrov]{kuznetsov2020controlling}
Kuznetsov, A., Shvechikov, P., Grishin, A., and Vetrov, D.
\newblock Controlling overestimation bias with truncated mixture of continuous distributional quantile critics.
\newblock In \emph{International Conference on Machine Learning}, pp.\  5556--5566. PMLR, 2020.

\bibitem[Lan et~al.(2020)Lan, Pan, Fyshe, and White]{lan2020maxmin}
Lan, Q., Pan, Y., Fyshe, A., and White, M.
\newblock Maxmin q-learning: Controlling the estimation bias of q-learning.
\newblock \emph{arXiv preprint arXiv:2002.06487}, 2020.

\bibitem[Langley(2000)]{langley00}
Langley, P.
\newblock Crafting papers on machine learning.
\newblock In Langley, P. (ed.), \emph{Proceedings of the 17th International Conference on Machine Learning (ICML 2000)}, pp.\  1207--1216, Stanford, CA, 2000. Morgan Kaufmann.

\bibitem[Li \& Hou(2019)Li and Hou]{li2019mixing}
Li, Z. and Hou, X.
\newblock Mixing update q-value for deep reinforcement learning.
\newblock In \emph{2019 international joint conference on neural networks (IJCNN)}, pp.\  1--6. IEEE, 2019.

\bibitem[Lillicrap(2015)]{lillicrap2015continuous}
Lillicrap, T.
\newblock Continuous control with deep reinforcement learning.
\newblock \emph{arXiv preprint arXiv:1509.02971}, 2015.

\bibitem[Luong et~al.(2019)Luong, Hoang, Gong, Niyato, Wang, Liang, and Kim]{luong2019applications}
Luong, N.~C., Hoang, D.~T., Gong, S., Niyato, D., Wang, P., Liang, Y.-C., and Kim, D.~I.
\newblock Applications of deep reinforcement learning in communications and networking: A survey.
\newblock \emph{IEEE communications surveys \& tutorials}, 21\penalty0 (4):\penalty0 3133--3174, 2019.

\bibitem[Lyu et~al.(2022)Lyu, Ma, Yan, and Li]{lyu2022efficient}
Lyu, J., Ma, X., Yan, J., and Li, X.
\newblock Efficient continuous control with double actors and regularized critics.
\newblock In \emph{Proceedings of the AAAI Conference on Artificial Intelligence}, volume~36, pp.\  7655--7663, 2022.

\bibitem[Mannor et~al.(2007)Mannor, Simester, Sun, and Tsitsiklis]{mannor2007bias}
Mannor, S., Simester, D., Sun, P., and Tsitsiklis, J.~N.
\newblock Bias and variance approximation in value function estimates.
\newblock \emph{Management Science}, 53\penalty0 (2):\penalty0 308--322, 2007.

\bibitem[Mnih(2013)]{mnih2013playing}
Mnih, V.
\newblock Playing atari with deep reinforcement learning.
\newblock \emph{arXiv preprint arXiv:1312.5602}, 2013.

\bibitem[Mnih(2016)]{mnih2016asynchronous}
Mnih, V.
\newblock Asynchronous methods for deep reinforcement learning.
\newblock \emph{arXiv preprint arXiv:1602.01783}, 2016.

\bibitem[Mnih et~al.(2015)Mnih, Kavukcuoglu, Silver, Rusu, Veness, Bellemare, Graves, Riedmiller, Fidjeland, Ostrovski, et~al.]{mnih2015human}
Mnih, V., Kavukcuoglu, K., Silver, D., Rusu, A.~A., Veness, J., Bellemare, M.~G., Graves, A., Riedmiller, M., Fidjeland, A.~K., Ostrovski, G., et~al.
\newblock Human-level control through deep reinforcement learning.
\newblock \emph{nature}, 518\penalty0 (7540):\penalty0 529--533, 2015.

\bibitem[Newey \& Powell(1987)Newey and Powell]{newey1987asymmetric}
Newey, W.~K. and Powell, J.~L.
\newblock Asymmetric least squares estimation and testing.
\newblock \emph{Econometrica: Journal of the Econometric Society}, pp.\  819--847, 1987.

\bibitem[Pan et~al.(2020)Pan, Cai, and Huang]{pan2020softmax}
Pan, L., Cai, Q., and Huang, L.
\newblock Softmax deep double deterministic policy gradients.
\newblock \emph{Advances in neural information processing systems}, 33:\penalty0 11767--11777, 2020.

\bibitem[Pendrith et~al.(1997)Pendrith, Ryan, and Sammut]{pendrith1997estimator}
Pendrith, M.~D., Ryan, M.~R., and Sammut, C.
\newblock \emph{Estimator variance in reinforcement learning: Theoretical problems and practical solutions}.
\newblock University of New South Wales, School of Computer Science and Engineering, 1997.

\bibitem[Puterman(2014)]{puterman2014markov}
Puterman, M.~L.
\newblock \emph{Markov decision processes: discrete stochastic dynamic programming}.
\newblock John Wiley \& Sons, 2014.

\bibitem[Raffin(2020)]{rl-zoo3}
Raffin, A.
\newblock Rl baselines3 zoo.
\newblock \url{https://github.com/DLR-RM/rl-baselines3-zoo}, 2020.

\bibitem[Schulman et~al.(2017)Schulman, Wolski, Dhariwal, Radford, and Klimov]{schulman2017proximal}
Schulman, J., Wolski, F., Dhariwal, P., Radford, A., and Klimov, O.
\newblock Proximal policy optimization algorithms.
\newblock \emph{arXiv preprint arXiv:1707.06347}, 2017.

\bibitem[Silver et~al.(2014)Silver, Lever, Heess, Degris, Wierstra, and Riedmiller]{silver2014deterministic}
Silver, D., Lever, G., Heess, N., Degris, T., Wierstra, D., and Riedmiller, M.
\newblock Deterministic policy gradient algorithms.
\newblock In \emph{International conference on machine learning}, pp.\  387--395. Pmlr, 2014.

\bibitem[Thrun \& Schwartz(2014)Thrun and Schwartz]{thrun2014issues}
Thrun, S. and Schwartz, A.
\newblock Issues in using function approximation for reinforcement learning.
\newblock In \emph{Proceedings of the 1993 connectionist models summer school}, pp.\  255--263. Psychology Press, 2014.

\bibitem[Todorov et~al.(2012)Todorov, Erez, and MuJoCo]{todorov2012physics}
Todorov, E., Erez, T., and MuJoCo, Y.~T.
\newblock A physics engine for model-based control.
\newblock In \emph{Proceedings of the 2012 IEEE/RSJ International Conference on Intelligent Robots and Systems}, pp.\  5026--5033, 2012.

\bibitem[Van~Hasselt et~al.(2016)Van~Hasselt, Guez, and Silver]{van2016deep}
Van~Hasselt, H., Guez, A., and Silver, D.
\newblock Deep reinforcement learning with double q-learning.
\newblock In \emph{Proceedings of the AAAI conference on artificial intelligence}, volume~30, 2016.

\bibitem[Wang et~al.(2022)Wang, Wang, Liang, Zhao, Huang, Xu, Dai, and Miao]{wang2022deep}
Wang, X., Wang, S., Liang, X., Zhao, D., Huang, J., Xu, X., Dai, B., and Miao, Q.
\newblock Deep reinforcement learning: A survey.
\newblock \emph{IEEE Transactions on Neural Networks and Learning Systems}, 35\penalty0 (4):\penalty0 5064--5078, 2022.

\bibitem[Wang \& Zou(2021)Wang and Zou]{wang2021online}
Wang, Y. and Zou, S.
\newblock Online robust reinforcement learning with model uncertainty.
\newblock \emph{Advances in Neural Information Processing Systems}, 34:\penalty0 7193--7206, 2021.

\bibitem[Watkins \& Dayan(1992)Watkins and Dayan]{watkins1992q}
Watkins, C.~J. and Dayan, P.
\newblock Q-learning.
\newblock \emph{Machine learning}, 8:\penalty0 279--292, 1992.

\bibitem[Yang et~al.(2020)Yang, Liu, Zhong, and Walid]{yang2020deep}
Yang, H., Liu, X.-Y., Zhong, S., and Walid, A.
\newblock Deep reinforcement learning for automated stock trading: An ensemble strategy.
\newblock In \emph{Proceedings of the first ACM international conference on AI in finance}, pp.\  1--8, 2020.

\bibitem[Zhang et~al.(2017)Zhang, Pan, and Kochenderfer]{zhang2017weighted}
Zhang, Z., Pan, Z., and Kochenderfer, M.~J.
\newblock Weighted double q-learning.
\newblock In \emph{IJCAI}, pp.\  3455--3461, 2017.

\bibitem[Ziebart(2010)]{ziebart2010modeling}
Ziebart, B.~D.
\newblock \emph{Modeling purposeful adaptive behavior with the principle of maximum causal entropy}.
\newblock Carnegie Mellon University, 2010.

\end{thebibliography}
\bibliographystyle{icml2025}

%%%%%%%%%%%%%%%%%%%%%%%%%%%%%%%%%%%%%%%%%%%%%%%%%%%%%%%%%%%%%%%%%%%%%%%%%%%%%%%
%%%%%%%%%%%%%%%%%%%%%%%%%%%%%%%%%%%%%%%%%%%%%%%%%%%%%%%%%%%%%%%%%%%%%%%%%%%%%%%
% APPENDIX
%%%%%%%%%%%%%%%%%%%%%%%%%%%%%%%%%%%%%%%%%%%%%%%%%%%%%%%%%%%%%%%%%%%%%%%%%%%%%%%
%%%%%%%%%%%%%%%%%%%%%%%%%%%%%%%%%%%%%%%%%%%%%%%%%%%%%%%%%%%%%%%%%%%%%%%%%%%%%%%
\newpage
\appendix
\onecolumn

\section{Convergence of the Moderate Bellman Equation}

In this section, we theoretically prove the convergence of the proposed moderate Bellman equation, which is defined by the incorporation of an expectile value function into the Bellman operator.  The formal definition of the moderate Bellman equation is represented as:
\begin{equation}
    \mathcal{T}_m Q(s, a) \coloneqq r(s, a) + \gamma \sum_{s' \in \mathcal{S}} p(s' \mid s, a) \left[ (1 - \omega) \max_{a' \in \mathcal{A}} Q(s', a') + \omega V(s') \right],
    \label{eq:moderate_bellman_equation}
\end{equation}
where $Q$ is the action-value function, $V$ represents the expectile value function, and $ \omega \in [0, 1] $ is a cautious weight. To analyze convergence, we make the following assumption:

\textbf{Assumption 1.}  
The expectile value function $ V(s) $ approximates the minimum of the action-value function over all possible actions at  a given state $ s $:
\begin{equation}
    V(s) = \min_{a \in \mathcal{A}} Q(s, a), \quad \forall s \in \mathcal{S}.
\end{equation}

Under Assumption 1, we can rewrite the moderate Bellman equation as follows:
\begin{equation}
    \mathcal{T}_m Q(s, a) \coloneqq r(s, a) + \gamma \sum_{s' \in \mathcal{S}} p(s' \mid s, a) \left[ (1 - \omega) \max_{a' \in \mathcal{A}} Q(s', a') + \omega \min_{a' \in \mathcal{A}} Q(s', a') \right].
\end{equation}

We next prove the convergence of the moderate Bellman operator $ \mathcal{T}_m $ using the contraction mapping theorem.

\begin{theorem}
For any $ \gamma \in (0, 1) $ and $ \omega \in [0, 1] $, the moderate Bellman operator $ \mathcal{T}_m $ in Equation~\ref{eq:moderate_bellman_equation} is a contraction with respect to the $ l_\infty $-norm. Consequently, the action-value function $ Q $ has a unique fixed point.
\end{theorem}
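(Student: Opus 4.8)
The plan is to invoke the Banach fixed-point theorem: once $\mathcal{T}_m$ is shown to be a $\gamma$-contraction on the space of bounded functions on $\Sc \times \Ac$ equipped with the $l_\infty$-norm (a complete metric space), both the existence and the uniqueness of a fixed point $Q$ follow at once. Thus the entire task reduces to establishing the bound $\|\mathcal{T}_m Q_1 - \mathcal{T}_m Q_2\|_\infty \leq \gamma \|Q_1 - Q_2\|_\infty$ for arbitrary bounded $Q_1, Q_2$.

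First I would fix a state-action pair $(s,a)$ and subtract the two operator applications. The reward term $r(s,a)$ cancels, leaving a $\gamma$-weighted average over $s' \sim p(\cdot \mid s,a)$ of the quantity $(1-\omega)\bigl[\max_{a'} Q_1(s',a') - \max_{a'} Q_2(s',a')\bigr] + \omega\bigl[\min_{a'} Q_1(s',a') - \min_{a'} Q_2(s',a')\bigr]$. The key step is the nonexpansiveness of both the $\max$ and $\min$ operators under the sup-norm: for any two bounded functions one has $|\max_{a'} Q_1(s',a') - \max_{a'} Q_2(s',a')| \leq \max_{a'} |Q_1(s',a') - Q_2(s',a')| \leq \|Q_1 - Q_2\|_\infty$, and the identical bound for $\min$, which follows immediately from the identity $\min_{a'} f(a') = -\max_{a'}\bigl(-f(a')\bigr)$. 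This nonexpansiveness is the only genuine content of the argument; everything else is bookkeeping.

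Finally I would combine these two bounds via the triangle inequality and exploit the two structural facts that pin the constant at exactly $\gamma$: the convex weights satisfy $(1-\omega) + \omega = 1$, and $p(\cdot \mid s,a)$ is a probability distribution so $\sum_{s'} p(s' \mid s,a) = 1$. Both convex combinations therefore collapse to $\|Q_1 - Q_2\|_\infty$, giving $|\mathcal{T}_m Q_1(s,a) - \mathcal{T}_m Q_2(s,a)| \leq \gamma \|Q_1 - Q_2\|_\infty$ uniformly in $(s,a)$; taking the supremum over $(s,a)$ yields the contraction, and $\gamma \in (0,1)$ makes the modulus strictly less than one.

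I do not anticipate a genuine obstacle: this is the standard Bellman-contraction proof, and the inclusion of the $\min$ term is harmless precisely because $\min$ is nonexpansive by the same reasoning as $\max$. The only point requiring mild care is confirming that the $\omega$-weighted blend of a $\max$ and a $\min$ does not inflate the Lipschitz constant beyond $1$ — which is exactly what the convexity of the weights guarantees, since a convex combination of two $1$-Lipschitz maps is again $1$-Lipschitz.
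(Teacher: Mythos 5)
Your proof is correct and follows essentially the same route as the paper's: cancel the reward term, apply the triangle inequality and the nonexpansiveness of $\max$ and $\min$ under the sup-norm, let the convex weights $(1-\omega)+\omega=1$ and the transition probabilities collapse the bound to $\gamma \left\| Q_1 - Q_2 \right\|_\infty$, and invoke the Banach fixed-point theorem. The only difference is presentational: where you substitute $\min_{a'} Q_i(s',a')$ for $V_i(s')$ directly, the paper makes this replacement explicit as a standing hypothesis (its Assumption~1, $V(s)=\min_{a}Q(s,a)$) before running the identical argument.
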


\begin{proof}
For two arbitrary functions $ Q_1 $ and $ Q_2 $, we analyze the difference $ |\mathcal{T}_m Q_1(s, a) - \mathcal{T}_m Q_2(s, a)| $:
\begin{align}
    &|\mathcal{T}_m Q_1(s, a) - \mathcal{T}_m Q_2(s, a)| \nonumber \\
    &= \left | r(s, a) + \gamma \sum_{s^\prime \in \mathcal{S}} p(s'\mid s, a) \left[ (1 - \omega)\max_{a'\in \mathcal{A}} Q_1(s', a') + \omega V_1(s')  \right] \right. \nonumber \\
    &\left. - r(s, a) + \gamma \sum_{s^\prime \in \mathcal{S}} p(s'\mid s, a) \left[ (1 - \omega)\max_{a'\in \mathcal{A}} Q_2(s', a') + \omega V_2(s')  \right]  \right | \nonumber \\
    &= \left |  \gamma (1 - \omega) \sum_{s^\prime \in \mathcal{S}} p(s'\mid s, a) \left[ \max_{a'\in \mathcal{A}} Q_1(s', a') - \max_{a'\in \mathcal{A}} Q_1(s', a')   \right] +  \gamma \cdot \omega \sum_{s^\prime \in \mathcal{S}} p(s'\mid s, a) \left[ V_1(s') - V_2(s')  \right]  \right | \nonumber \\
    &\stackrel{(a)}{=} \left |  \gamma (1 - \omega) \sum_{s^\prime \in \mathcal{S}} p(s'\mid s, a) \left[ \max_{a'\in \mathcal{A}} Q_1(s', a') - \max_{a'\in \mathcal{A}} Q_1(s', a')   \right] +  \gamma \cdot \omega \sum_{s^\prime \in \mathcal{S}} p(s'\mid s, a) \left[ \min_{a'\in \mathcal{A}} Q_1(s', a') - \min_{a'\in \mathcal{A}} Q_1(s', a')  \right]  \right | \nonumber \\
    &\le \gamma (1 - \omega) \sum_{s^\prime \in \mathcal{S}} p(s'\mid s, a) \left |   \max_{a'\in \mathcal{A}} Q_1(s', a') - \max_{a'\in \mathcal{A}} Q_1(s', a')  \right |  +  \gamma \cdot \omega \sum_{s^\prime \in \mathcal{S}} p(s'\mid s, a) \left | \min_{a'\in \mathcal{A}} Q_1(s', a') - \min_{a'\in \mathcal{A}} Q_1(s', a') \right | \nonumber \\
    & \le \gamma(1-\omega) \left \| Q_1 - Q_2 \right \|_\infty + \gamma \cdot \omega \left \| Q_1 - Q_2 \right \|_\infty \nonumber \\
    &= \gamma \left \| Q_1 - Q_2 \right \|_\infty
\end{align}
where (a) is due to the {\bf Assumption 1.} Since $ \gamma \in (0, 1) $, the operator $ \mathcal{T}_m $ is a contraction mapping with respect to the $ l_\infty $-norm. According to the Banach fixed-point theorem \cite{puterman2014markov}, $ Q $ has a unique fixed point under $ \mathcal{T}_m $. This completes the proof.

\end{proof}

\section{Algorithm Details}

In this section, we delineate the comprehensive procedures for the proposed algorithms: MPG, MPG-SD, MAC, and MQC. The MPG and MPG-SD algorithms are presented in Algorithm~\ref{alg:mpg_algorithm}, the MAC algorithm is outlined in Algorithm~\ref{alg:mac_algorithm}, and the MQC algorithm is described in Algorithm~\ref{alg:mqc_algorithm}.

\begin{algorithm}[h]
    \caption{The Proposed MPG-SD Algorithm} \label{alg:mpg_algorithm}
    \begin{algorithmic}[1] 
        \STATE {\bf Input:} Discount factor $\gamma \in (0, 1)$, learning rates $\lambda_\pi, \lambda_Q, \lambda_V > 0$, cautious weight $\omega \in [0, 1]$, expectile level $\tau \in (0, 0.5)$, target update rate  $\eta \in (0, 1)$, random noise $\epsilon$ for exploration, clipped random noise $\bar{\epsilon}$ for target policy smoothing, and delay parameter $d$.

        \STATE {\bf Initialization:} Network parameters  $\phi, \theta, \psi$, target-network parameters $\bar{\phi} \leftarrow \phi, \bar{\theta} \leftarrow \theta$, replay buffer $\mathcal{D} = \emptyset$, initial state $s_0 \in \mathcal{S}$, policy update time $u = 0$.

        \FOR{each epoch}
            % \FOR{each environment step}
            \FOR{$t=0$ to $T-1$}
                % \STATE $a_t = \pi_{\phi}(s_t) + {\RED n_t},\; n_t \sim \mathcal{N}(0,\sigma^2)$
                \STATE  $a_t = \pi_{\phi}(s_t) +  \epsilon$
                \STATE ${s_{t+1}} \sim p(\cdot \mid s_t, a_t)$
                \STATE $\mathcal{D} \leftarrow \mathcal{D} \cup \{ (s_{t} ,a_t, r(s_{t},a_{t}), s_{t+1}) \}$
                % \STATE $s \leftarrow s'$
                \STATE $s_t \leftarrow s_{t+1}$
            
                \FOR{each update step}
                    \STATE $u \leftarrow u + 1$
                    \STATE Sampling $(s, a, r, s') \sim \mathcal{D}$
                    \STATE Update $\theta$, and $\psi$ via \eqref{eq:proposed_critic_loss}, and \eqref{eq:proposed_expectile_value_loss}, respectively
    
                    \IF{$u \bmod d = 0$}
                        \STATE Update $\phi$ via \eqref{eq:ddpg_policy_update}
                        \STATE Update $\bar{\phi}$ and $\bar{\theta}$ via \eqref{eq:target_update}
                    \ENDIF
                \ENDFOR
            \ENDFOR
        \ENDFOR
    \end{algorithmic}
    $\Diamond$ If $d = 1$ and $\bar{\epsilon} = 0$, MPG-SD is simplified as MPG.
\end{algorithm}

\begin{algorithm}[h!]
    \caption{The Proposed MAC Algorithm} \label{alg:mac_algorithm}
    \begin{algorithmic}[1] 
        \STATE {\bf Input:} Discount factor $\gamma \in (0, 1)$, learning rates $\lambda_\pi, \lambda_Q, \lambda_V > 0$, cautious weight $\omega \in [0, 1]$, expectile level $\tau \in (0, 0.5)$, target update rate  $\eta \in (0, 1)$, and initial temperature parameter $\alpha$.
        
        \STATE {\bf Initialization:} Network parameters  $\phi, \theta, \psi$, target-network parameters $\bar{\theta} \leftarrow \theta$, replay buffer $\mathcal{D} = \emptyset$, and initial state $s_0 \in \mathcal{S}$.
                
        \FOR{each epoch}
            \FOR{$t=0$ to $T-1$}
                \STATE  $a_t \sim \pi_{\phi}(\cdot \mid s_t)$
                \STATE ${s_{t+1}} \sim p(\cdot \mid s_t, a_t)$
                \STATE $\mathcal{D} \leftarrow \mathcal{D} \cup \{ (s_{t} ,a_t, r(s_{t},a_{t}), s_{t+1}) \}$
                \STATE $s_t \leftarrow s_{t+1}$
            
                \FOR{each update step}
                    \STATE Sampling $(s, a, r, s') \sim \mathcal{D}$
                    \STATE Update $\phi$, $\theta$, and $\psi$ via \eqref{eq:mac_policy_update} \eqref{eq:proposed_critic_loss}, and \eqref{eq:proposed_expectile_value_loss}, respectively
                    \STATE Update $\bar{\theta}$ via soft update mechanism
                
                \ENDFOR
            \ENDFOR
        \ENDFOR
    \end{algorithmic}
\end{algorithm}

\begin{algorithm}[h]
    \caption{The Proposed MQC Algorithm} \label{alg:mqc_algorithm}
    \begin{algorithmic}[1] 
        \STATE {\bf Input:} Discount factor $\gamma \in (0, 1)$, learning rates $\lambda_\pi, \lambda_Q, \lambda_V > 0$, cautious weight $\omega \in [0, 1]$, expectile level $\tau \in (0, 0.5)$, target update rate  $\eta \in (0, 1)$, initial temperature parameter $\alpha$, number of critic networks $N$, number of atoms per networks $M$, and number of selected atoms per network $k$.
        
        \STATE {\bf Initialization:} Network parameters  $\phi, \theta, \psi$, target-network parameters $\bar{\theta} \leftarrow \theta$, replay buffer $\mathcal{D} = \emptyset$, and initial state $s_0 \in \mathcal{S}$.
                
        \FOR{each epoch}
            \FOR{$t=0$ to $T-1$}
                \STATE  $a_t \sim \pi_{\phi}(\cdot \mid s_t)$
                \STATE ${s_{t+1}} \sim p(\cdot \mid s_t, a_t)$
                \STATE $\mathcal{D} \leftarrow \mathcal{D} \cup \{ (s_{t} ,a_t, r(s_{t},a_{t}), s_{t+1}) \}$
                \STATE $s_t \leftarrow s_{t+1}$
            
                \FOR{each update step}
                    \STATE Sampling $(s, a, r, s') \sim \mathcal{D}$
                    \STATE Update $\phi$, and $\psi$ via \eqref{eq:tqc_policy_loss} \eqref{eq:m_tqc_protester_loss}, respectively
                    \STATE Update $\theta$ via Huber quantile loss using \eqref{eq:m_tqc_target_distribution}
                    \STATE Update $\bar{\theta}$ via soft update mechanism
                
                \ENDFOR
            \ENDFOR
        \ENDFOR
    \end{algorithmic}
\end{algorithm}

\section{Additional Experiments}

In this section, we present supplementary experimental results alongside the hyperparameters utilized in our algorithms.

\subsection{Number of critics in TQC}
While MPG, MPG-SD, and MAC clearly do not require additional resources compared to their baseline algorithms, MQC appears to have increased complexity relative to TQC. This is because MQC uses two critics and one protester, whereas TQC only uses two critics. To demonstrate the efficiency of MQC compared to TQC, we conducted experiments with an increased number of critics in TQC. Specifically, we evaluated TQC with 1, 2, 3, and 5 critics, denoted as TQC(N=1), TQC(N=2), TQC(N=3), and TQC(N=5), respectively. The results of these experiments, summarized in Table~\ref{table:tqc_simulation}, show that MQC outperforms both TQC(N=3) and even TQC(N=5). These findings highlight the superiority of incorporating the moderate target into TQC.

\begin{table*}[t]
    \centering
     \caption{
    Average reward and standard deviation calculated after training over five episodes for TQC based algorithms, with training conducted across five different seeds.
     }
    \label{table:tqc_simulation}
    \renewcommand{\arraystretch}{1.2}
        \resizebox{\textwidth}{!}{
        \begin{tabular}{ c c c c c c c c} 
        \toprule
            & \bfseries Ant-v4 & \bfseries Walker2d-v4 & \bfseries Hopper-v4  & \bfseries HalfCheetah-v4 & \bfseries Humanoid-v4 & \bfseries Average \\ \midrule

        TQC(N=1) &  2214.4 ± 1917.5 & 3909.9 ± 1695.2   &  2040.9 ± 1431.1  &11721.8 ± 450.6  & 7696.9 ± 1097.0 & 5516.78 ± 1582.36     \\ 
        TQC(N=2)  & 4575.5 ± 1960.3   & 4856.3 ± 294.0   & 2397.3 ± 1513.0 & 11361.7 ± 2218.7 &  6959.8 ± 1925.8 &  6030.12 ± 1582.36 \\ 
        TQC(N=3)  & 4286.1 ± 1735.5    & 4076.8 ± 1684.9    & 2885.6 ± 1022.9  & 12389.1 ± 215.6  & 7254.6 ± 2570.6   & 6178.44 ± 1445.90   \\ 
        TQC(N=5)  &  3645.9 ± 1842.6  &  4711.2 ± 665.6  & 3462.1 ± 125.4 & 12036.4 ± 440.1 & 8102.5 ± 314.7  &  6391.62 ± 677.68 \\ \midrule
        \bfseries MQC(N=2)  & 4861.3 ± 1577.9   & 4418.9 ± 676.0   & 3352.3 ± 596.1 & 12029.1 ± 350.0 &  7643.2 ± 1229.8 &  6460.96 ± 885.96 \\ 

         \bottomrule
        \end{tabular}
        }
\end{table*}

\section{Hyperparameters.}
In this section, we provide detailed hyperparameter settings for the proposed algorithms and the baseline benchmark algorithms mentioned in the main text.

\begin{table*}[h!]
    \centering
    
    \resizebox{\textwidth}{!}{
    \begin{tabular}{c c}
        
        \begin{tabular}{ c c } 
            \toprule
             Hyperparameter & Common \\ \midrule
            Batch Size &  256 \\
            Buffer size &  $1 \cdot 10^6$ \\
            Optimizer  &  Adam \\ 
            Discount factor ($\gamma$) &  0.99 \\
            Non-Linearity &  ReLU \\
            Target update rate ($\eta$) & 0.005 \\
            Total time steps & $1 \cdot 10^6$ ($2 \cdot 10^6$ for Humanoid-v4) \\
            \bottomrule
        \end{tabular}
        &
        
        \begin{tabular}{c c c c c} 
            \toprule
             Hyperparameter & SAC & MAC & TQC & MQC \\ \midrule
            Hidden layer dimension  & \multicolumn{4}{c}{[256, 256]} \\
            Learning rate ($\lambda_{\pi}, \lambda_{Q}, \lambda_{V}$) & \multicolumn{4}{c}{3e-4} \\
            Number of critics ($N$) & 2 & 1 & 2 & 2 \\
            Number of protesters & - & 1 & - & 1 \\
            Number of atoms ($M$) & - & - & 25 & 25 \\
            Number of selected atoms per network(k) & - & - & 23 & 23 \\
            Expectile level ($\tau$)  & - & 0.01 & - & 0.01   \\
            Cautious weight ($\omega$)  & - & 0.13 & - & 0.01 \\
            \bottomrule
        \end{tabular}
    \end{tabular}
    }

    \vspace{0.5cm} 
    \resizebox{\textwidth}{!}{
    \begin{tabular}{ c  c c c c c} 
        \toprule
         Hyperparameter & DDPG & DDPG(min-Q) &  MPG & TD3 & MPG-SD \\ \midrule
        Hidden layer dimension  & \multicolumn{5}{c}{[400, 300]} \\
        Learning rate ($\lambda_{\pi}, \lambda_{Q}, \lambda_{V}$) & \multicolumn{5}{c}{1e-3 (3e-4 for Humanoid-v4)} \\
        Action noise ($\epsilon$)  & \multicolumn{5}{c}{$\mathcal{N}(0, 0.1^2)$} \\
        Number of critics ($N$) & 1 & 2 & 1 & 2 & 1 \\
        Number of protesters & - & - & 1 & - & 1 \\
        Policy and target update interval(d) & 1 & 1 & 1 & 2 & 2   \\
        Target action noise ($\bar{\epsilon}$)  & - & - & - & $\text{clip}(\mathcal{N}(0, 0.2), -0.5, 0.5)$ & $\text{clip}(\mathcal{N}(0, 0.2), -0.5, 0.5)$ \\
        Expectile level ($\tau$)  & - & - & 0.01 & - & 0.01 \\
        Cautious weight ($\omega$)  & - & - & 0.2 & - & 0.2 \\
        \bottomrule
    \end{tabular}
    }

    \caption{Hyperparameter settings.}
\end{table*}

% \section{You \emph{can} have an appendix here.}

% You can have as much text here as you want. The main body must be at most $8$ pages long.
% For the final version, one more page can be added.
% If you want, you can use an appendix like this one.  

% The $\mathtt{\backslash onecolumn}$ command above can be kept in place if you prefer a one-column appendix, or can be removed if you prefer a two-column appendix.  Apart from this possible change, the style (font size, spacing, margins, page numbering, etc.) should be kept the same as the main body.
%%%%%%%%%%%%%%%%%%%%%%%%%%%%%%%%%%%%%%%%%%%%%%%%%%%%%%%%%%%%%%%%%%%%%%%%%%%%%%%
%%%%%%%%%%%%%%%%%%%%%%%%%%%%%%%%%%%%%%%%%%%%%%%%%%%%%%%%%%%%%%%%%%%%%%%%%%%%%%%

\end{document}